\newcommand*{\addFileDependency}[1]{
  \typeout{(#1)}
  \@addtofilelist{#1}
  \IfFileExists{#1}{}{\typeout{No file #1.}}
}
\newcommandx{\mvdwnote}[2][1=]{\todo[linecolor=blue,backgroundcolor=blue!25,bordercolor=blue,#1]{MvdW: #2}}
\renewcommand{\KL}[2]{\mathrm{KL}\left( {#1} \| {#2} \right)}
\icmltitlerunning{Rates of Convergence for Sparse Variational Gaussian Process Regression}
\begin{document}

\twocolumn[
\icmltitle{Rates of Convergence for Sparse Variational Gaussian Process Regression}




\begin{icmlauthorlist}
\icmlauthor{David R. Burt}{cam}
\icmlauthor{Carl Edward Rasmussen}{cam,prowler}
\icmlauthor{Mark van der Wilk}{prowler}
\end{icmlauthorlist}

\icmlaffiliation{cam}{University of Cambridge, Cambridge, UK}
\icmlaffiliation{prowler}{PROWLER.io, Cambridge, UK}

\icmlcorrespondingauthor{David R. Burt}{drb62@cam.ac.uk}

\icmlkeywords{Gaussian Processes, Variational Inference}

\vskip 0.3in
]



\printAffiliationsAndNotice{}

\begin{abstract}
Excellent variational approximations to Gaussian process posteriors have been developed which avoid the $\BigO\left(N^3\right)$ scaling with dataset size $N$. They reduce the computational cost to $\BigO\left(NM^2\right)$, with $M\ll N$ the number of \emph{inducing variables}, which summarise the process. While the computational cost seems to be linear in $N$, the true complexity of the algorithm depends on how $M$ must increase to ensure a certain quality of approximation. We show that with high probability the KL divergence can be made arbitrarily small by growing $M$ more slowly than $N$. A particular case is that for regression with normally distributed inputs in D-dimensions with the Squared Exponential kernel, $M=\BigO(\log^D N)$ suffices. Our results show that as datasets grow, Gaussian process posteriors can be approximated cheaply, and provide a concrete rule for how to increase $M$ in continual learning scenarios.
\end{abstract}

\section{Introduction}\label{sec:intro}

Gaussian processes (GPs) \citep{rasmussen_gaussian_2005} are distributions over functions that are convenient priors in Bayesian models. They can be seen as infinitely wide neural networks \citep{neal1996bayesian}, and are particularly popular in regression models, as they produce good uncertainty estimates, and have closed-form expressions for the posterior and marginal likelihood. The most well known drawback of GP regression is the computational cost of the exact calculation of these quantities, which scales as $\BigO\left(N^3\right)$ in time and $\BigO\left(N^2\right)$ in memory where $N$ is the number of training examples. Low-rank approximations \citep{quin2005unifying} choose $M$ \emph{inducing variables} which summarise the entire posterior, reducing the cost to $\BigO\left(NM^2 + M^3\right)$ time and $\BigO\left(NM + M^2\right)$ memory.

While the computational cost of adding inducing variables is well understood, results on how many are needed to achieve a good approximation are lacking. As the dataset size increases, we cannot expect to keep the capacity of the approximation constant without the quality deteriorating. 
Taking into account the rate at which $M$ must increase with $N$ to achieve a particular approximation accuracy, as well as the cost of initializing or optimizing the inducing points, determines a more realistic sense of the costs of scaling Gaussian processes.

Approximate GPs are often trained using variational inference \citep{titsias_variational_2009}, which minimizes the KL divergence from an approximate posterior to the full posterior process \citep{matthews_sparse_2016}. We use this KL divergence as our metric for the approximate posterior's quality. We show that under intuitive assumptions the number of inducing variables only needs to grow at a sublinear rate for the KL between the approximation and the posterior to go to zero. This shows that very sparse approximations can be used for large datasets, without introducing much bias into hyperparameter selection through evidence lower bound (ELBO) maximisation, and with approximate posteriors that are accurate in terms of their prediction and uncertainty.

The core idea of our proof is to use upper bounds on the KL divergence that depend on the quality of a Nystr\"om approximation to the data covariance matrix. Using existing results, we show this error can be understood in terms of the spectrum of an infinite-dimensional integral operator. In the case of stationary kernels, our main result proves that priors with smoother sample functions, and datasets with more concentrated inputs admit sparser approximations.

\paragraph{Main results}
We assume that training inputs are drawn i.i.d.~from a fixed distribution, and prove bounds of the form
\[
\KL{Q}{\hat{P}} \leq \BigO\left(\frac{g(M,N)}{\sn^2\delta}\left(1+\frac{c\|\bfy\|_2^2}{\sn^2}\right)+ N\epsilon \right)
\]
with probability at least $1-\delta$, where $\hat{P}$ is the posterior Gaussian process, $Q$ is a variational approximation, and $\bfy$ are the training targets. The function $g(M,N)$ depends on both the kernel and input distribution, and grows linearly in $N$ and generally decays rapidly in $M$. The quality of the initialization determines $\epsilon$, which can be made arbitrarily small (e.g.~an inverse power of $N$) at some additional computational cost. 
\Cref{thm:boundsforeigfeats,thm:avgeigencase} give results of this form for a collection of inducing variables defined using spectral information, \cref{thm:generalexplicit,thm:avgcase} hold for inducing points. 

\section{Background and notation}\label{sec:background}
\subsection{Gaussian process regression}
We consider Gaussian process regression, where we observe \emph{training data}, $\data=\{\bfx_i ,y_i\}_{i=1}^N$ with $\bfx_i \in \mcX$ and $y_i \in \R.$
Our goal is to predict outputs $y^*$ for new inputs $\vx^*$ while taking into account the uncertainty we have about $f(\cdot)$ due to the limited size of the training set. We follow a Bayesian approach by placing a prior over $f$, and a likelihood to relate $f$ to the observed data through some noise. Our model is
\begin{align*}
    f \sim \mathcal{GP}(\nu(\cdot), k(\cdot, \cdot)), &&
    y_i \!= \!f(\bfx_i) + \epsilon_i, &&
    \epsilon_i \sim \mathcal{N}(0, \sn^2),
\end{align*}
where $\nu: \mcX \to \R$ is the \emph{mean function} and $k: \mcX \times \mcX \to \R$ is the \emph{covariance function}. We take $\nu \equiv 0;$ the general case can be derived similarly after first centering the process. We use the posterior for making predictions, and the marginal likelihood for selecting hyperparameters, both of which have closed-form expressions \citep{rasmussen_gaussian_2005}.
The log marginal likelihood is of particular interest to us, as the quality of its approximation and our posterior approximation is linked. Its form is
\begin{equation}\label{eq:ml}
    \mcL= -\frac{1}{2}\bfy\transpose{\bf K}_n\inv\bfy -\frac{1}{2}\log\detbar{{\bf K}_n} - \frac{N}{2}\log(2\pi) \,,
\end{equation}
where ${\bf K}_n = \Kff + \sn^2 \bfI$, and $\left[\Kff\right]_{i,j}=k(\bfx_i,\bfx_j).$

\subsection{Sparse variational Gaussian process regression}
\label{sec:lower}
While all quantities of interest have analytic expressions, their computation is infeasible for large datasets due to the $\BigO\left(N^3\right)$ time complexity of the determinant and inverse. Many approaches have been proposed that rely on a low-rank approximation to $\Kff$ \citep{quin2005unifying, rahimi_random_2008}, which allow the determinant and inverse to be computed in $\BigO\left(NM^2\right)$, where $M$ is the rank of the approximating matrix.

We consider the variational framework developed by \citet{titsias_variational_2009}, which minimizes the KL divergence from the posterior process to an approximate GP
\begin{equation}
    \GP\left(\vk_{\cdot\vu}\Kuu\inv\vmu, k_{\cdot\cdot} + \vk_{\cdot\vu}\Kuu\inv\left(\boldsymbol{\Sigma} - \Kuu\right)\Kuu\inv\vk_{\vu\cdot}\right) \label{eq:qf} \,,
\end{equation}
where $\left[\vk_{\vu\cdot}\right]_i\!=\!k(\cdot, \vz_i)$, $\left[\Kuf\right]_{m,i}\!:=\!k(\bfz_m,\bfx_i)$ and $\left[\Kuu\right]_{m,n} := k(\bfz_m,\bfz_n)$. This variational distribution is determined through defining the density of the function values $\vu \in \Reals^M$ at \emph{inducing inputs} $Z = \left\{\vz_m\right\}_{m=1}^M$ to be $q(\vu) = \NormDist{\vmu, \boldsymbol{\Sigma}}$. $Z$, $\vmu$, and $\boldsymbol{\Sigma}$ are variational parameters.
\citet{titsias_variational_2009} solved the convex optimization problem for $\vmu$ and $\boldsymbol{\Sigma}$ explicitly, resulting in the evidence lower bound (ELBO):  
\begin{multline}\label{eqn:elbo}
    \mcLl\!=\!-\frac{1}{2}\bfy\transpose{\bf Q}_n^{-1}\bfy - \frac{1}{2}\!\log\detbar{{\bf Q}_n} - \frac{N}{2}\!\log(2\pi)-\frac{t}{2\sn^2} \!\!\!\!\!\!
\end{multline}
where ${\bf Q}_n = \Qff+\sn^2\bfI$, $\Qff=\Kuf\transpose\Kuu^{-1}\Kuf$ and $t=\Tr\left(\Kff-\Qff\right).$ \citet{hensman_gaussian_2013} proposed optimising over $\{\vmu, \boldsymbol{\Sigma}\}$ numerically, which allows minibatch optimization. In both cases, $\mcL = \mcLl + \mathrm{KL}(Q||\hat{P})$ \citep{matthews_sparse_2016}, so any bounds on the KL-divergence we prove hold for \citet{hensman_gaussian_2013} as well, as long as $\{\vmu, \boldsymbol{\Sigma}\}$ is at the optimum value.

\Citet{titsias_variational_2009} suggests jointly maximizing the ELBO (\cref{eqn:elbo}) w.r.t.~the variational and hyperparameters.
This comes at the cost of introducing bias in hyperparameter estimation \citep{turner_sahani_2011}, notably the overestimation of the $\sn^2$ \citep{bauer_understanding_2016}. Adding inducing points reduces the KL gap \citep{titsias_variational_2009}, and the bias is practically eliminated when enough inducing variables are used. 

\subsection{Interdomain inducing features}
\citet{lazaro-gredilla_inter-domain_2009} showed that one can specify the distribution $q(\vu),$ on integral transformations of $f(\cdot).$ Using these \emph{interdomain} inducing variables can lead to sparser representations, or computational benefits \citep{hensman2018variational}. Interdomain inducing variables are defined by
\[
u_m = \int_{\mathcal{X}} f(\bfx) g(\bfx; \vz_m)d\bfx \,.
\]
When $g(\bfx; \vz_m)= \delta(\bfx-\bfz_m)$ the $u_m$ are inducing points. Interdomain features require replacing $\vk_{\vu\cdot}$ and $\Kuu$ in \cref{eq:qf} with integral transforms of the kernel. In later sections, we investigate particular interdomain transformations with interesting convergence properties.

\subsection{Upper bounds on the marginal likelihood}\label{sec:upper}
Combined with \cref{eqn:elbo}, an upper bound on \cref{eq:ml} can show when the KL divergence is small, which indicates inference has been successful and hyperparameter estimates are likely to have little bias. \citet{titsias_variational_2014} introduced an upper bound that can be computed in $\BigO\left(NM^2\right)$:
\begin{equation}
     \mcLu \!:=\! -\frac{1}{2} \bfy\transpose\left({\bf Q}_n\!\!+\!t\bfI \right)^{\!-\!1}\bfy \!-\! \frac{1}{2}\!\log\left(\detbar{{\bf Q}_n}\right) - \frac{N}{2}\!\!\log 2\pi. \label{eqn:titsiasupper}
\end{equation}
This gives a \emph{data-dependent} upper bound, that can be computed after seeing the data, and for given inducing inputs.

\subsection{Spectral properties of the covariance matrix}

While for small datasets spectral properties of the covariance matrix can be analyzed numerically, we need a different approach for understanding these properties for a typical large dataset.  The \emph{covariance operator,} $\mathcal{K},$ captures the limiting properties of $\Kff$ for large $N$. It is defined by
\begin{equation}
    \mcK g(\bfx') =  \int_{\mathcal{X}} g(\bfx) k(\bfx,\bfx') p(\bfx)d\bfx,
\end{equation}
where $p(\bfx)$ is a probability density from which the inputs are assumed to be drawn. We assume that $\mcK$ is compact, which is the case if $k(\bfx,\bfx')$ is bounded. Under this assumption, the spectral theorem tells us that $\mcK$ has a discrete spectrum. The (finite) sequence of eigenvalues of $\frac{1}{N}\Kff$ converges to the (infinite) sequence of eigenvalues of $\mcK$ \citep{koltchinskii2000random}. 
  Mercer's Theorem \cite{mercer1909} tells us that for continuous kernel functions,
 \begin{equation}
     k(\bfx,\bfx') = \sum_{m=1}^\infty \lambda_m \phi_m(\bfx)\phi_m(\bfx'),
 \end{equation}
 where the $\left(\lambda_m,\phi_m\right)_{i=1}^\infty$ are eigenvalue-eigenfunction pairs of the operator $\mathcal{K},$ with the eigenfunctions orthonormal in $L^2(\mathcal{X})_p.$ Additionally, $\sum_{m=1}^\infty \lambda_m < \infty$.

\subsection{Selecting the number of inducing variables}
Ideally, the number of inducing variables should be selected to make the $KL(Q||\hat{P})$ small. Currently, the most common advice is to increase the number of inducing variables $M$ until the lower bound (\cref{eqn:elbo}) no longer improves. This is a necessary, but not a sufficient condition for the ELBO to be tight and the KL to be small. Taking the upper bound (\cref{eqn:titsiasupper}) into consideration, we can guarantee a good approximation when difference between the upper and lower bounds converges to zero, as this upper bounds the KL.


Both these procedures rely on bounds computed for a given dataset, and a specific setting of variational parameters. While practically useful, they do not tell us how many inducing variables we should expect to use \emph{before} observing any data. In this work, we focus on \emph{a priori} bounds, and asymptotic behavior as $N\to\infty$ and $M$ grows as a function of $N$. These bounds guarantee how the variational method scales computationally for \emph{any} dataset satisfying intuitive conditions. This is particularly important for continual learning scenarios, where we incrementally observe more data. With our a priori results we can guarantee that the growth in required computation will not exceed a certain rate.

\section{Bounds on the KL divergence for eigenfunction inducing features}\label{sec:eigenbounds}
In this section, we prove a priori bounds on the KL divergence using inducing features that rely on spectral information about the covariance matrix or the associated operator. The results in this section form the basis for bounds on the KL divergence for inducing points (\cref{sec:IPandproofs}).

\subsection{A Posteriori Bounds on the KL Divergence}
We first consider  \emph{a posteriori} bounds on the KL divergence that hold for any $\bfy,$ derived by looking at the difference between $\mcLu$ and $\mcLl.$ We will use these bounds in later sections to analyze asymptotic convergence properties. 
\begin{lemma}\label{lem:KLbound}
Let $\tilk=\Kff-\Qff,$  $t=\Tr(\tilk)$ and $\widetilde{\lambda}_{max}$ denote the largest eigenvalue of $\tilk$. Then,
\begin{align*}
	\KL{Q}{\hat{P}} &\!\leq\! \frac{1}{2\sn^2}\!\left(t \!+\! \frac{\widetilde{\lambda}_{max}\|\bfy\|^2_2}{\sn^2\!+\!\widetilde{\lambda}_{max}}\right) \!\leq\! \frac{t}{2\sn^2}\!\left(1\!+\! \frac{\|\bfy\|^2_2}{\sn^2\!+\!t}\right) .
\end{align*}
\end{lemma}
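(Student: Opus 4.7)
The plan is to start from the identity $\KL{Q}{\hat{P}} = \mcL - \mcLl$ (noted in the excerpt, from Matthews et al.\ 2016) and expand the difference via \cref{eq:ml} and \cref{eqn:elbo}. Term-by-term subtraction gives
\[
\KL{Q}{\hat{P}} = \tfrac{1}{2}\bfy\transpose\bigl({\bf Q}_n^{-1} - {\bf K}_n^{-1}\bigr)\bfy + \tfrac{1}{2}\log\!\frac{\detbar{{\bf Q}_n}}{\detbar{{\bf K}_n}} + \frac{t}{2\sn^2}.
\]
The last summand already matches the trace contribution in the claim, so the work reduces to bounding the other two terms.

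For the log-determinant, writing ${\bf K}_n = {\bf Q}_n + \tilk$ and using that $\tilk \succeq 0$ (a standard Schur-complement fact for the Nystr\"om residual $\Kff - \Qff$) gives ${\bf K}_n \succeq {\bf Q}_n$, so $\log(\detbar{{\bf Q}_n}/\detbar{{\bf K}_n}) \le 0$ and this term is simply dropped. For the quadratic form I would make the symmetric substitution $\mathbf{A} := {\bf Q}_n^{-1/2}\tilk\,{\bf Q}_n^{-1/2}$ and $\mathbf{z} := {\bf Q}_n^{-1/2}\bfy$, under which ${\bf K}_n^{-1} = {\bf Q}_n^{-1/2}(\mathbf{I}+\mathbf{A})^{-1}{\bf Q}_n^{-1/2}$ and
\[
\bfy\transpose\bigl({\bf Q}_n^{-1} - {\bf K}_n^{-1}\bigr)\bfy = \mathbf{z}\transpose \mathbf{A}(\mathbf{I}+\mathbf{A})^{-1}\mathbf{z}.
\]
The eigenvalues of $\mathbf{A}(\mathbf{I}+\mathbf{A})^{-1}$ are $\alpha_i/(1+\alpha_i)$ for the non-negative eigenvalues $\alpha_i$ of $\mathbf{A}$. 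Because ${\bf Q}_n \succeq \sn^2 \mathbf{I}$, one has $\|\mathbf{A}\|_{op} \leq \widetilde{\lambda}_{max}/\sn^2$, and monotonicity of $x \mapsto x/(1+x)$ upgrades this to the uniform bound $\alpha_i/(1+\alpha_i) \leq \widetilde{\lambda}_{max}/(\sn^2+\widetilde{\lambda}_{max})$. Combining with $\|\mathbf{z}\|_2^2 = \bfy\transpose{\bf Q}_n^{-1}\bfy \leq \|\bfy\|^2_2/\sn^2$ yields the first stated inequality; the second then follows because $\widetilde{\lambda}_{max} \leq t$ (trace of a PSD matrix dominates its largest eigenvalue) and $x/(\sn^2+x)$ is increasing.

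The main technical point is choosing the spectral manipulation so that $\widetilde{\lambda}_{max}$ appears in the saturating ratio $\widetilde{\lambda}_{max}/(\sn^2+\widetilde{\lambda}_{max})$ rather than the cruder $\widetilde{\lambda}_{max}/\sn^2$: a direct operator-norm bound on ${\bf Q}_n^{-1} - {\bf K}_n^{-1}$ (using, e.g., the identity $\smash{{\bf Q}_n^{-1}-{\bf K}_n^{-1} = {\bf Q}_n^{-1}\tilk{\bf K}_n^{-1}}$) loses the saturation and gives a useless bound once $\widetilde{\lambda}_{max}$ is not small compared to $\sn^2$. The symmetrised form above supplies the required $(\mathbf{I}+\mathbf{A})^{-1}$ factor; everything else is routine.
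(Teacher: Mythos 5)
Your proof is correct, and it reaches the stated constants exactly, but it takes a genuinely different route from the paper's. The paper does not expand $\KL{Q}{\hat{P}} = \mcL - \mcLl$ directly; it instead invokes, as an external ingredient, the refined upper bound $\mcL \leq \mcLu'$ of \citet{titsias_variational_2014}, in which ${\bf K}_n^{-1}$ is replaced by $({\bf Q}_n + \widetilde{\lambda}_{max}\bfI)^{-1}$ and $\log\detbar{{\bf K}_n}$ by $\log\detbar{{\bf Q}_n}$, and then bounds the gap $\mcLu' - \mcLl$. In that difference the log-determinant terms cancel identically (both involve $\detbar{{\bf Q}_n}$), so the paper never needs your step of dropping $\log\left(\detbar{{\bf Q}_n}/\detbar{{\bf K}_n}\right) \leq 0$ via ${\bf K}_n \succeq {\bf Q}_n$; what remains is the quadratic form $\frac{1}{2}\bfy\transpose\left({\bf Q}_n^{-1} - ({\bf Q}_n + \widetilde{\lambda}_{max}\bfI)^{-1}\right)\bfy$, which the paper bounds by diagonalizing ${\bf Q}_n$ and maximizing $\widetilde{\lambda}_{max}/(\gamma_i^2 + \gamma_i\widetilde{\lambda}_{max})$ over the eigenvalues $\gamma_i \geq \sn^2$ of ${\bf Q}_n$ --- the same saturation structure you extract from $\mathbf{A}(\mathbf{I}+\mathbf{A})^{-1}$. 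In effect you re-derive Titsias's refinement inline: your operator-norm step $\alpha_i \leq \widetilde{\lambda}_{max}/\sn^2$ is precisely where the relaxation $\tilk \preceq \widetilde{\lambda}_{max}\bfI$ enters the cited bound. What your version buys is self-containedness (no appeal to the external upper bound, at the cost of the extra, easy, log-det argument); what the paper's version buys is the thematic link to computable \emph{a posteriori} bounds, since it exhibits the KL as dominated by the gap between an upper and a lower bound on $\mcL$, which is the quantity one monitors in practice. Both proofs then obtain the second inequality identically, from $t \geq \widetilde{\lambda}_{max}$ and monotonicity of $x \mapsto x/(\sn^2+x)$.
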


The proof bounds the difference between a refinement of $\mcLu$ also proven by \citet{titsias_variational_2014} and $\mcLl$  through an algebraic manipulation and is given in \cref{app:lemma1}. The second inequality is a consequence of $t \geq \widetilde{\lambda}_{max}.$
We typically expect $\|\vy\|_2^2 = \BigO(N)$, which is the case when the variance of the observed $y$s is bounded, so if $t\ll 1/N$ the KL divergence will be small.

\subsection{A priori bounds: averaging over $\bfy$}\label{sec:avgbound}
\Cref{lem:KLbound} is typically overly pessimistic, as it assumes $\bfy$ can be parallel to the largest eigenvector of $\tilk.$ In this section, we consider a bound that holds \emph{a priori} over the training outputs, when they are drawn from the model. This allows us to bound the KL divergence for a `typical' dataset. 
 
\begin{lemma}\label{lem:avgKLbound}
For any set of $\{\bfx_i\}_{i=1}^N$, if the outputs $\{y_i\}_{i=1}^N$ are generated according to our generative model, then 
\begin{equation}
\frac{t}{2\sn^{2}}\leq  \E_y \left[ \KL{Q}{\hat{P}} \right] \leq \frac{t}{\sn^2}
\end{equation}
\end{lemma}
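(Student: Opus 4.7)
The plan is to compute $\mathbb{E}_y[\KL{Q}{\hat{P}}]$ almost in closed form, using the identity $\KL{Q}{\hat{P}} = \mcL - \mcL_\ell$ already cited from \citet{matthews_sparse_2016}, and the fact that under the generative model $\bfy \sim \mcN(\vzero, \bfK_n)$.

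First I would subtract \cref{eqn:elbo} from \cref{eq:ml}, so that the $-\frac{N}{2}\log(2\pi)$ terms cancel and the KL is written as a sum of a quadratic form in $\bfy$, a log-determinant ratio, and $\frac{t}{2\sn^2}$. Taking expectations, only the quadratic form is random, and $\E_y[\bfy^\top A \bfy] = \Tr(A\bfK_n)$. Applied to $A=\bfK_n^{-1}$ this gives $N$, and applied to $A=\bfQ_n^{-1}$ it gives $N + \Tr(\bfQ_n^{-1}\tilk)$ since $\bfK_n = \bfQ_n + \tilk$. Recognising the standard formula for the KL between two zero-mean Gaussians, these terms bundle into
\begin{equation*}
\E_y\!\left[\KL{Q}{\hat{P}}\right] \;=\; \KL{\mcN(\vzero,\bfK_n)}{\mcN(\vzero,\bfQ_n)} + \frac{t}{2\sn^2}.
\end{equation*}

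The lower bound is immediate from non-negativity of the Gaussian KL. For the upper bound I need to show that this Gaussian KL is itself at most $\frac{t}{2\sn^2}$. Writing
\begin{equation*}
2\,\KL{\mcN(\vzero,\bfK_n)}{\mcN(\vzero,\bfQ_n)} = \Tr(\bfQ_n^{-1}\tilk) - \log\det(\bfI + \bfQ_n^{-1}\tilk),
\end{equation*}
and noting that $\tilk \succeq 0$ (it is a Schur-complement-type residual, as used throughout Sec.~3), the eigenvalues $\mu_i$ of $\bfQ_n^{-1/2}\tilk\bfQ_n^{-1/2}$ are non-negative. Applying $\log(1+\mu_i) \geq 0$ termwise drops the log-determinant, leaving $\Tr(\bfQ_n^{-1}\tilk)$. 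Finally, $\bfQ_n \succeq \sn^2 \bfI$ gives $\bfQ_n^{-1} \preceq \sn^{-2}\bfI$, so $\Tr(\bfQ_n^{-1}\tilk) \leq \sn^{-2}\Tr(\tilk) = t/\sn^2$. Combining, $\E_y[\KL{Q}{\hat{P}}] \leq \frac{t}{2\sn^2} + \frac{t}{2\sn^2} = \frac{t}{\sn^2}$.

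I do not expect a real obstacle here: the main step is the algebraic identification of $\E_y[\mcL - \mcL_\ell]$ with a Gaussian KL plus the trace term, and once that is in hand the bounds reduce to the elementary inequalities $\log(1+x)\geq 0$ and $\bfQ_n \succeq \sn^2\bfI$. The slight cosmetic care needed is just keeping the sign of the log-determinant straight when going from $|\bfK_n|/|\bfQ_n|$ to $\det(\bfI + \bfQ_n^{-1}\tilk)$, and verifying $\tilk \succeq 0$ so that the spectral argument applies.
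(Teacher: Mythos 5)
Your proposal is correct and takes essentially the same route as the paper's proof: both identify $\E_y\left[\KL{Q}{\hat{P}}\right]$ as $\frac{t}{2\sn^2}$ plus the Gaussian divergence $\KL{\mathcal{N}(0,\mathbf{K}_n)}{\mathcal{N}(0,\mathbf{Q}_n)}$, obtain the lower bound from non-negativity of that divergence, and obtain the upper bound by discarding the log-determinant term (which is negative since $\mathbf{K}_n \succeq \mathbf{Q}_n$) and bounding $\Tr\left(\mathbf{Q}_n^{-1}\tilk\right) \leq \sn^{-2}t$. The paper phrases this last step as H\"older's inequality for Schatten norms, but it is the same spectral bound you invoke via $\mathbf{Q}_n^{-1} \preceq \sn^{-2}\mathbf{I}$.
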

The lower bound tells us that \emph{even if the training data is contained in an interval of fixed length, we need to use more inducing points for problems with large $N$ if we want to ensure the sparse approximation has converged}. This is shown in \Cref{fig:fixed_m} for data uniformly sampled on the interval $[0,5]$ with 15 inducing points. 
\begin{proof}[Sketch of Proof]\renewcommand{\qedsymbol}{}
\begin{figure}
    \centering
    \includegraphics[width=0.48\textwidth]{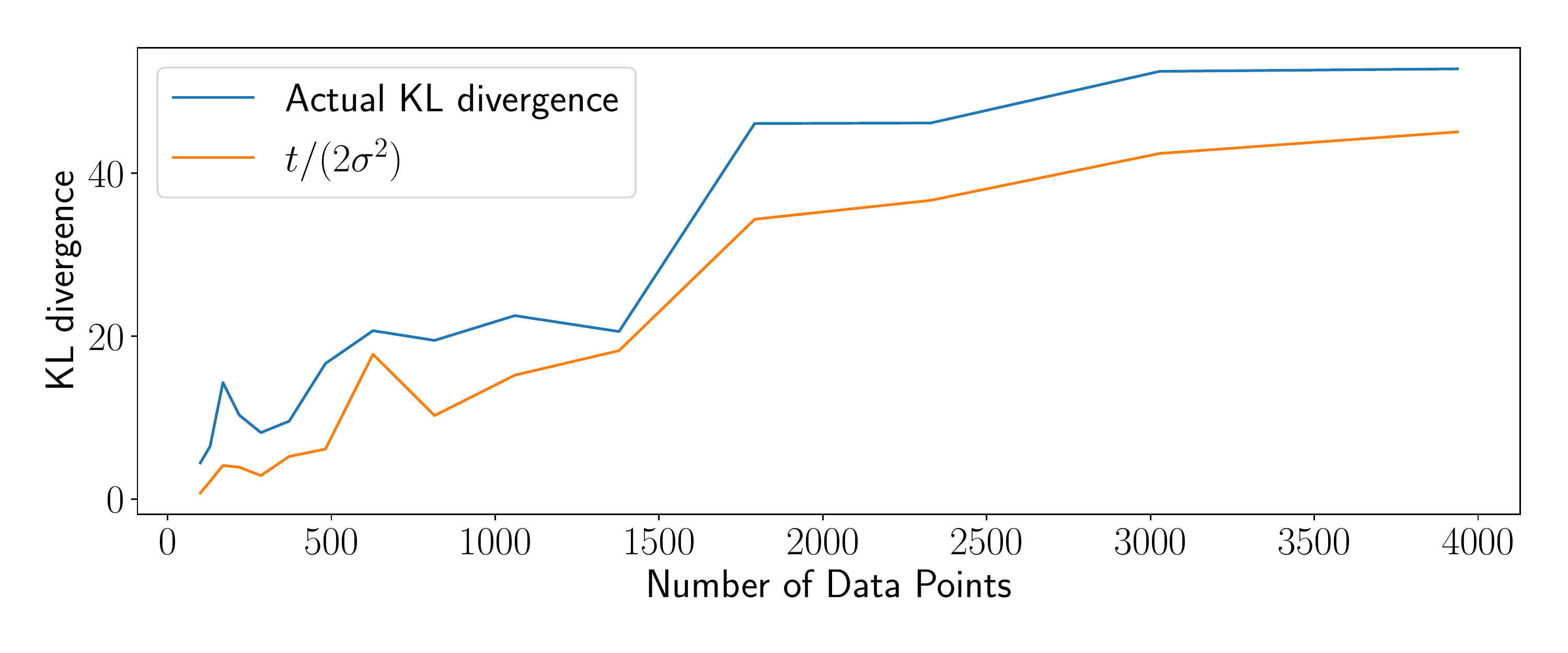} \vspace{-.5cm}
    \caption{Increasing $N$ with fixed $M$ increases the expected KL divergence. $t/2\sn^2$ is a lower bound for the expected value over the KL divergence when $\bfy$ is generated according to our prior model.}
    \label{fig:fixed_m}
\end{figure}
\begin{align*}
        &\Exp{y}{ \KL{Q}{\hat{P}}}  = \frac{t}{2\sn^2}+  \int \NormDist{\bfy; 0, {\bf K}_n} \\ &\qquad\times \log \left( \frac{\NormDist{\bfy; 0,{\bf K}_n}}{\NormDist{\bfy; 0,{\bf Q}_n}}\right)\calcd\bfy
\end{align*}
The second term on the right is a KL divergence between centered Gaussian distributions. The lower bound follows from Jensen's inequality. The proof of the upper bound (\cref{app:KLdiv}) bounds this KL divergence above by $t/(2\sn^2).$ 
\end{proof}

\subsection{Minimizing the upper bound: an idealized case}\label{sec:eigenvec_feats}
We now consider the set of $M$ interdomain inducing features that minimize the upper bounds in \Cref{lem:KLbound,lem:avgKLbound}. Taking into account the lower bound in \Cref{lem:avgKLbound}, they must be within a factor of two of the optimal features defined without reference to training outputs under the assumption of \Cref{lem:avgKLbound}.
Consider
\[ u_m := \sum_{i=1}^N w_i^{(m)}f(\bfx_i)\]
where $w_i^{(m)}$ is the $i^{th}$ entry in the $m^{th}$ eigenvector of $\Kff.$ That is, $u_m$ is a linear combination of inducing points placed at each data point, with weights coming from the entries of the $m^{th}$ eigenvector of $\Kff.$
We show in \cref{app:covcalcs} that
\begin{align}
&\cov(u_m,u_k) =  \mathbf{w}^{(m)\mathrm{\textsf{\tiny T}}}\Kff \mathbf{w}^{(k)}= \lambda_k(\Kff) \delta_{m,k}, \\
&\cov(u_m,f(\bfx_i)) = \left[\Kff \mathbf{w}^{(m)}\right]_i = \lambda_m(\Kff) w^{(m)}_i.
\end{align}

Inference with these features can be seen as the variational equivalent of the optimal parametric projection of the model derived by \citet{ferrari-trecate_finite-dimensional_1999}.

Computation with these features requires computing the matrices $\Kuf$ and $\Kuu.$ $\Kuu$ contains the first $M$ eigenvalues of $\Kff,$ $\Kuf$ contains the corresponding eigenvectors. Computing the first $M$ eigenvalues and vectors (i.e. performing atruncated SVD of $\Kff$) can be done in $\BigO(N^2M)$ using, for example, Lanczos iteration \cite{lanczos1950iteration}. With these features $\Qff$ is the \emph{optimal} rank-$M$ approximation to $\Kff$ and leads to $\tilde{\lambda}_{max}=\lambda_{M+1}(\Kff)$ and $t= \sum_{m=M+1}^N\lambda_{m}(\Kff).$


\subsection{Eigenfunction inducing features}
We now modify the construction given in \cref{sec:eigenvec_feats} to no longer depend on $\Kff$ explicitly (which depends on the specific training inputs) and instead depend on assumptions about the training data. This construction is the \emph{a priori} counterpart of the eigenvector inducing features, as it is defined prior to observing a specific set of training inputs.

Consider the limit as we have observed a large amount of data, so that $\frac{1}{N}\Kff \to \mcK.$ This leads us to replace the eigenvalues, $\{\lambda_m(\Kff)\}_{m=1}^M,$ with the operator eigenvalues, $\{\lambda_m\}_{m=1}^M,$ and the eigenvectors, $\{\mathbf{w^{(m)}}\}_{m=1}^M,$ with the eigenfunctions, $\{\phi_m\}_{m=1}^M,$ yielding
\begin{equation}
    u_m  = \int_{\mathcal{X}}f(\bfx) \phi_m(\bfx)p(\bfx)d\bfx. 
\end{equation}
Note that $p(\bfx)$ influences $u_m$.
In \cref{app:covcalcs}, we show
\[
\cov(u_m,u_k)  \!=\! \lambda_m \delta_{m,k} \text{\, and \,} \cov(u_m,f(\bfx_i)) \!= \!\lambda_m \phi_m(\bfx_i).
\]
These features can be seen as the variational equivalent of methods utilizing truncated priors proposed in \citet{zhu_gaussian_1997}, which are the optimal linear $M$ dimensional parametric GP approximation defined \emph{a priori}, in terms of minimizing expected mean square error. 

In the case of the SE kernel and Gaussian inputs, closed form expressions for eigenfunctions and values are known \citep{zhu_gaussian_1997}. For Mat\'{e}rn kernels with inputs uniform on $[a,b]$, expressions for the eigenfunctions and eigenvalues needed in order to compute $\Kuf$ and $\Kuu$ can be found in \citet{youla1957solution}. However, the formulas involve solving systems of transcendental equations limiting the practical applicability of these features for Mat\'ern kernels.

\subsection{A priori bounds on the KL divergence for eigenfunction features}
Having developed the necessary preliminary results, we now prove the first a priori bounds on the KL divergence. We start with eigenfunction features, which can be implemented practically in certain instances discussed above.
\begin{theorem}\label{thm:boundsforeigfeats}
Suppose $N$ training inputs are drawn i.i.d.~according to input density $p(\bfx).$ For inference with $M$ eigenfunction inducing variables defined with respect to the prior kernel and $p(\bfx),$ with probability at least $1-\delta,$ 
\begin{equation}
	\KL{Q}{\hat{P}} \leq \frac{C}{2\sn^2\delta}\left(1+ \frac{\|\bfy\|^2_2}{\sn^2}\right) 
\end{equation}
where we have defined $C= N\sum_{m=M+1}^\infty \lambda_m,$ and the $\lambda_m$ are the eigenvalues of the integral operator $\mcK$ associated to the prior kernel and $p(\bfx).$
\end{theorem}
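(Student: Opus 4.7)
The plan is to start from the first inequality in Lemma \ref{lem:KLbound} and control $t=\Tr(\widetilde{K})$ in expectation over the random training inputs, then convert an expected bound into a high-probability bound via Markov's inequality. Since the right-hand side of Lemma \ref{lem:KLbound} is monotone in $t$, and we have the weaker form
\[
\KL{Q}{\hat{P}} \leq \frac{t}{2\sn^2}\left(1+\frac{\|\bfy\|_2^2}{\sn^2}\right),
\]
it suffices to show that $t \leq C/\delta$ with probability at least $1-\delta$, where $C=N\sum_{m=M+1}^\infty \lambda_m$.

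The first step is to compute $\Qff$ explicitly for the eigenfunction features. Using the covariance identities recalled in the excerpt, $\Kuu=\mathrm{diag}(\lambda_1,\dots,\lambda_M)$ and $[\Kuf]_{m,i}=\lambda_m\phi_m(\bfx_i)$, so
\[
[\Qff]_{i,j} = \sum_{m=1}^M \lambda_m \phi_m(\bfx_i)\phi_m(\bfx_j).
\]
Combining with Mercer's expansion of $k$, the diagonal of the residual becomes
\[
[\widetilde{K}]_{i,i} = \sum_{m=M+1}^\infty \lambda_m \phi_m(\bfx_i)^2,
\]
hence $t = \sum_{i=1}^N \sum_{m=M+1}^\infty \lambda_m \phi_m(\bfx_i)^2$.

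Next I take expectations with respect to $\bfx_i\sim p$. Because the eigenfunctions $\phi_m$ are orthonormal in $L^2(\mathcal{X})_p$, we have $\E_{\bfx\sim p}[\phi_m(\bfx)^2]=1$ for every $m$, and by monotone convergence (all terms are nonnegative) the sum and expectation interchange, giving
\[
\E[t] = N\sum_{m=M+1}^\infty \lambda_m = C.
\]
Since $t \geq 0$, Markov's inequality yields $\Pr(t \geq C/\delta) \leq \delta$, so with probability at least $1-\delta$ we have $t \leq C/\delta$. Substituting into the weakened form of Lemma \ref{lem:KLbound} produces the claimed bound.

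The only place that requires care is the interchange of the sum over $m$ with the expectation over $\bfx$, which is justified by Tonelli since the summands are nonnegative; summability of $\sum_m \lambda_m$ (from Mercer) ensures $\E[t]<\infty$ when $C<\infty$. Everything else is an algebraic rearrangement and a one-line application of Markov's inequality, so I do not expect a substantive obstacle here; the main conceptual point is recognising that eigenfunction features make $\Qff$ exactly the truncated Mercer expansion, which is what makes $\E[t]$ collapse to the tail of the operator spectrum.
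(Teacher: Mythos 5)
Your proposal is correct and follows essentially the same route as the paper's own proof: compute $\Qff$ as the truncated Mercer expansion via the eigenfunction-feature covariances, identify $t$ as the sum of the spectral tail evaluated at the inputs, use orthonormality of the $\phi_m$ to obtain $\E_{\bfX}[t]=C$, and conclude with Markov's inequality combined with the (monotone-in-$t$) bound of \cref{lem:KLbound}. Your two added touches, the Tonelli justification for swapping sum and expectation and the explicit weakening $\sn^2+t\geq\sn^2$ to match the theorem's stated form, are harmless refinements of steps the paper leaves implicit.
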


\begin{theorem}\label{thm:avgeigencase}
With the assumptions and notation of \Cref{thm:boundsforeigfeats} if $\bfy$ is distributed according to a sample from the prior generative model, with probability at least $1-\delta,$ 
\begin{equation}
	\KL{Q}{\hat{P}} \leq \frac{C}{\delta\sn^2},
\end{equation}
\end{theorem}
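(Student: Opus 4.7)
} The plan is to chain together the $y$-averaged \Cref{lem:avgKLbound}, a computation of the expected trace $\mathbb{E}_x[t]$ via Mercer's theorem, and Markov's inequality, in that order. Relative to the proof of \Cref{thm:boundsforeigfeats}, we should get to drop the $\|\bfy\|_2^2$ factor (and hence the ``$1+$'' inside the parenthesis) because averaging $\bfy$ over the prior lets us replace the worst-case bound from \Cref{lem:KLbound} with the tighter expected bound from \Cref{lem:avgKLbound}.

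First, I would condition on the training inputs $\{\bfx_i\}_{i=1}^N$ and invoke \Cref{lem:avgKLbound} to obtain
\[
\Exp{\bfy}{\KL{Q}{\hat{P}} \,\big|\, \{\bfx_i\}} \leq \frac{t}{\sn^2},
\]
where $t=\Tr(\Kff-\Qff)$ still depends on the inputs. The next step is to take the expectation over the inputs. Using the eigenfunction feature construction from \Cref{sec:eigenvec_feats}, together with Mercer's expansion $k(\bfx,\bfx') = \sum_{m=1}^\infty \lambda_m \phi_m(\bfx)\phi_m(\bfx')$, the $i$th diagonal entry of $\Kff-\Qff$ equals $\sum_{m=M+1}^\infty \lambda_m \phi_m(\bfx_i)^2$. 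Because the eigenfunctions are orthonormal in $L^2(\mcX)_p$, each $\mathbb{E}_{\bfx\sim p}[\phi_m(\bfx)^2]=1$, so
\[
\Exp{\bfx}{t} \;=\; N \sum_{m=M+1}^\infty \lambda_m \;=\; C.
\]
Combining the two expectations by the tower property gives
$\mathbb{E}_{\bfx,\bfy}[\KL{Q}{\hat{P}}] \leq C/\sn^2$.

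Finally, since the KL divergence is nonnegative, Markov's inequality applied to the joint expectation over $(\bfx,\bfy)$ yields
\[
\Pr\!\left(\KL{Q}{\hat{P}} \geq \frac{C}{\delta \sn^2}\right) \leq \delta,
\]
which is the claim. I do not foresee any real obstacle: the exchange of trace with expectation and the evaluation of $\mathbb{E}[\phi_m(\bfx)^2]=1$ are both straightforward, and Fubini/Tonelli is justified because everything in sight is nonnegative. The only point that deserves care is making sure that the ``with probability at least $1-\delta$'' statement is interpreted jointly over $\bfx$ and $\bfy$ (matching the statement of \Cref{thm:boundsforeigfeats}), since the randomness of $\bfy$ is what lets us avoid the $\|\bfy\|_2^2$ term that appears in the a posteriori bound of \Cref{lem:KLbound}.
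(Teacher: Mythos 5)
Your proposal is correct and follows essentially the same route as the paper: bound $\Exp{\bfy}{\KL{Q}{\hat{P}}}$ by $t/\sn^2$ via \Cref{lem:avgKLbound}, compute $\Exp{\bfX}{t} = N\sum_{m=M+1}^\infty \lambda_m = C$ from the Mercer expansion and orthonormality of the eigenfunctions, and finish with Markov's inequality over the joint randomness of $(\bfX,\bfy)$. The only cosmetic difference is ordering (the paper computes the expected trace first, you condition on the inputs first), which changes nothing substantive.
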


\begin{proof}[Sketch of Proof of \Cref{thm:boundsforeigfeats,thm:avgeigencase}]
We first prove a bound on $t$ that holds in expectation over input data matrices of size $N$ with entries drawn i.i.d.~from $p(\bfx).$
 A direct computation of $\Qff$ shows that $\left[\Qff\right]_{i,j} = \sum_{m=1}^{M} \lambda_m \phi_m(\bfx_i)\phi_m(\bfx_j).$ Using the Mercer expansion of the kernel matrix and subtracting,
\[\left[\tilk\right]_{i,i} = \sum_{m=M+1}^\infty \lambda_m\phi^2_m(\bfx_i).\]
Summing this and taking the expectation,
\begin{align*}\label{eq:expected_trace}
\Exp{\bfX}{t}  =  N\sum_{m=M+1}^\infty \lambda_m\Exp{\bfx_i}{\phi^2_m(\bfx_i)} = N\sum_{m=M+1}^\infty \lambda_m.
\end{align*}
The second equality follows from the eigenfunctions having norm 1. Applying Markov's inequality and \Cref{lem:KLbound,lem:avgKLbound} yields \Cref{thm:boundsforeigfeats,thm:avgeigencase} respectively. \qedhere



\end{proof}

\subsection{Squared exponential kernel and Gaussian inputs}\label{sec:segauss1d}
For the SE kernel in one-dimension with hyperparameters $(v,\ell^2)$ and $p(x)\sim \mathcal{N}(0,\sigma^2),$ 
\[
\lambda_m = v\sqrt{2a/A}B^{m-1}
\]
where $a=1/(4\sigma^2),$ $b=1/(2\ell^2),$ $c=\sqrt{a^2+2ab},$ $A=a+b+c$ and $B=b/A$ \citep{zhu_gaussian_1997}. In this case, using the geometric series formula,
\[
 \sum_{m=M+1}^\infty \lambda_m = \frac{v\sqrt{2a}}{(1-B)\sqrt{A}}B^M.
\]

Using this bound with \Cref{thm:boundsforeigfeats,thm:avgeigencase}, we see that by choosing $M=\BigO(\log N),$ under the assumptions of either theorem, we can obtain a bound on the KL divergence that tends to $0$ as $N$ tends to infinity.

\subsection{Mat\'ern kernels and uniform measure}
For the Mat\'ern $k+1/2$ kernel in one dimension, $\lambda_m \asymp m^{-2k-2}$ \citep{rittermulti1995,seeger2008information}, so $\sum_{m=M+1}^\infty \lambda_m = \BigO(M^{-2k-1}).$ In order for the bound in \Cref{thm:avgeigencase} to converge to $0,$ we need $\lim\limits_{N\to \infty} \frac{N}{M^{2k+1}} \to 0.$ This holds if $M=N^{\alpha}$ for $\alpha>\frac{1}{2k+1}.$ For $k>0,$ this bound indicates the number of inducing features can grow sublinearly with the amount of data. 
 
\section{Bounds for inducing points}\label{sec:IPandproofs}

We have shown that using spectral knowledge of either $\Kff$ or $\mcK$ we obtain bounds on the KL divergence indicating that the number of inducing features can be much smaller than the number of data points. While mathematically convenient, the practical applicability of the interdomain features used is limited by computational considerations in the case of the eigenvector features and by the lack of analytic expressions for $\Kuf$ in most cases for the eigenfunction features, as well not knowing the true input density, $p(\bfx)$.

In contrast, inducing points can be efficiently applied to any kernel. In this section, we show that with a good initialization based on the empirical input data distribution, inducing points lead to bounds that are only slightly weaker than the interdomain approaches suggested so far. 

Proving this amounts to obtaining bounds on the trace of the error of a \emph{Nystr\"om approximation} to $\Kff.$ The Nystr\"om approximation, popularized for kernel methods by \citet{williams_using_2001}, approximates a positive semi-definite symmetric matrix by subsampling columns. If $M$ columns, $\{\mathbf{c_i}\}_{i=1}^M,$ are selected from $\Kff,$ the approximation used is $\Kff \approx \mathbf{C}\mathbf{\overline{C}}^{-1}\mathbf{C\transpose},$ where  $\mathbf{C}=  [\mathbf{c_1}, \mathbf{c_2}, \dots, \mathbf{c_M}]$ and $\mathbf{\overline{C}}$ is the $M \times M$ principal submatrix associated to the $\{\mathbf{c_i}\}_{i=1}^M.$ Note that if inducing points are placed at the points associated to each column in the data matrix, then $\Kuu =\mathbf{\overline{C}}$ and $\Kuf\transpose =\mathbf{C},$ so $ \Qff=\mathbf{C}\mathbf{\overline{C}}^{-1}\mathbf{C\transpose}.$

\begin{lemma}{\citep{belabbas_spectral_2009}}\label{lem:detsample}
Given a symmetric positive semidefinite matrix, $\Kff,$ if $M$ columns are selected to form a Nystr\"om approximation such that the probability  of selecting a subset of columns, $Z,$ is proportional to the determinant of the principal submatrix formed by these columns and the matching rows, then,
\begin{equation}\label{eqn:nystrom_bellabas}
    \Exp{Z}{\Tr(\Kff-\Qff)} \leq (M+1) \sum_{m=M+1}^N \lambda_m(\Kff).
\end{equation}
\end{lemma}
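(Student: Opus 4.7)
The plan is to rewrite the expected trace as a ratio of elementary symmetric polynomials in the eigenvalues of $\Kff$, and then bound this ratio. Throughout, order the eigenvalues $\lambda_1 \geq \lambda_2 \geq \cdots \geq \lambda_N \geq 0$ and write $e_k$ for the $k$th elementary symmetric polynomial in them. For a subset $Z \subseteq \{1,\ldots,N\}$ let $\Kff_{Z}$ denote the principal submatrix indexed by $Z$, so that the determinantal sampling distribution takes the form $\Pr(Z) = \det(\Kff_Z)/\sum_{|S|=M}\det(\Kff_S)$.

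First I would use the classical identity for diagonal entries of Schur complements: for any $i \notin Z$,
\begin{equation*}
  [\Kff - \Qff]_{ii} \;=\; \frac{\det(\Kff_{Z\cup\{i\}})}{\det(\Kff_{Z})},
\end{equation*}
which gives $\Tr(\Kff-\Qff) = \sum_{i \notin Z} \det(\Kff_{Z\cup\{i\}})/\det(\Kff_Z)$. Next I would apply the Cauchy--Binet formula to rewrite the normalizer as $\sum_{|S|=M}\det(\Kff_S) = e_M$. Substituting these into $\Exp{Z}{\Tr(\Kff-\Qff)}$, the factors of $\det(\Kff_Z)$ cancel, and I can collect terms by the $(M{+}1)$-subset $T = Z \cup \{i\}$, noting that each such $T$ arises in exactly $M+1$ ways. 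This yields the clean identity
\begin{equation*}
  \Exp{Z}{\Tr(\Kff-\Qff)} \;=\; (M+1)\,\frac{e_{M+1}}{e_M}.
\end{equation*}

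The remaining step, which I expect to be the only subtle piece, is the inequality $e_{M+1}/e_M \leq \sum_{m=M+1}^{N}\lambda_m$. I would prove it by splitting $e_{M+1}$ according to the largest index appearing in each $(M{+}1)$-subset: if the largest index is $j \geq M+1$, then the remaining $M$ indices come from $\{1,\ldots,j-1\}$, giving
\begin{equation*}
  e_{M+1} \;=\; \sum_{j=M+1}^{N} \lambda_j \cdot e_M(\lambda_1,\ldots,\lambda_{j-1}) \;\leq\; e_M \sum_{j=M+1}^{N}\lambda_j,
\end{equation*}
since $e_M(\lambda_1,\ldots,\lambda_{j-1}) \leq e_M(\lambda_1,\ldots,\lambda_N) = e_M$ by nonnegativity of the omitted terms. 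Combining this with the identity above gives the claimed bound.

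The main obstacles are bookkeeping rather than conceptual: getting the Schur complement identity in the right form and correctly accounting for the multiplicity $M+1$ when reindexing the double sum. Once those combinatorial steps are done, the final bound on $e_{M+1}/e_M$ is elementary, so the argument reduces to carefully chaining standard identities.
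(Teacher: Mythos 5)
The paper does not actually prove this lemma---it imports it directly from \citet{belabbas_spectral_2009}---so there is no in-paper proof to compare against; your blind reconstruction is correct, and it is essentially the original argument from that reference. Specifically, your Schur-complement and Cauchy--Binet bookkeeping gives the exact identity $\Exp{Z}{\Tr(\Kff-\Qff)} = (M+1)\,e_{M+1}/e_{M}$, and your split of $e_{M+1}$ by largest index gives $e_{M+1}/e_{M} \leq \sum_{m=M+1}^{N}\lambda_m(\Kff)$ by nonnegativity of the eigenvalues (with the harmless implicit assumptions that $\mathrm{rank}(\Kff)\geq M$ so the sampling distribution and the ratio are well defined, and that subsets $Z$ with $\det(\Kff_Z)=0$ carry zero probability and contribute zero to the reindexed sum), which is exactly how the cited paper derives the stated bound.
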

\begin{figure}
      \centering      
 \includegraphics[width=0.4\textwidth]{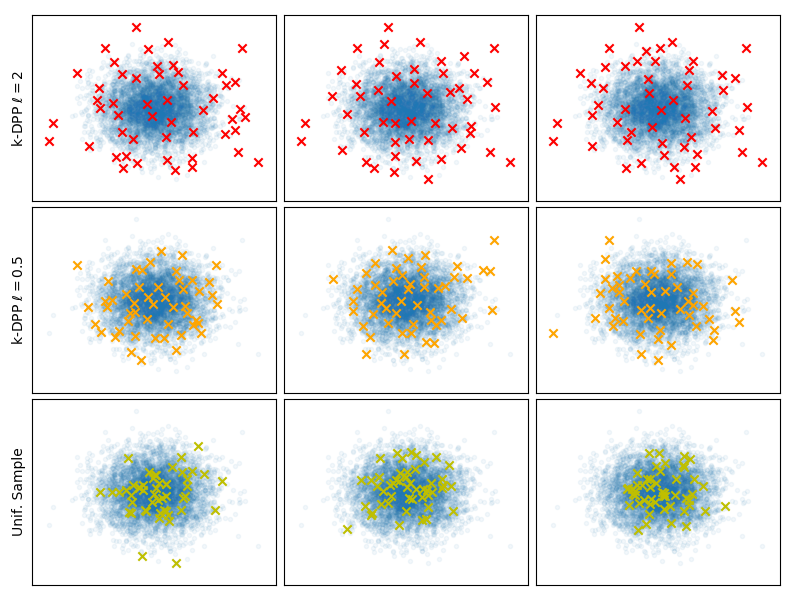}
      \caption{Determinant based sampling, with a SE kernel with $\ell=2$ (top) and with $\ell=.5$ (middle) leads to more dispersed inducing points than uniform sampling (bottom).}\label{fig:det_init}
\end{figure}

This means that on average well-initialized inducing points lead to bounds within a factor of $M+1$ of eigenvector inducing features.

The selection scheme described introduces negative correlations between inducing points locations, leading the $\bfz_i$ to be well-dispersed amongst the training data, as shown in \cref{fig:det_init}. The strength of these negative correlations is determined by the particular kernel.

The proposed initialization scheme is equivalent to sampling $Z$ according to a discrete k-Determinantal Point Process (k-DPP), defined over $\Kff$. \citet{belabbas_spectral_2009} suggested that sampling from this distribution, which has support over $\binom{N}{M}$ subsets of columns, may be computationally infeasible. \Citet{kulesza2011k} provided an exact algorithm for sampling from k-DPPs given an eigendecomposition of the kernel matrix. In our setting, we require our initialisation scheme to have similar computational cost to computing the sparse GP bounds, which prohibits us from performing eigendecomposition. Instead, we rely on cheaper ``$\epsilon$ close'' sampling methods. We therefore provide the following corollary of \cref{lem:detsample}, proven in \cref{app:algorithm}.
\begin{cor}\label{cor:approxkdpp}
Suppose the inducing points, $Z,$ are sampled from an $\epsilon$ k-DPP, $\nu,$ i.e a distribution over subsets of $\bfX$ of size $M$ satisfying, $d(\mu,\nu)_{TV}\leq \epsilon$ where $d(\cdot,\cdot)_{TV}$ denotes total variation distance and $\mu$ is a k-DPP on $\Kff$. Suppose the $k(\bfx,\bfx)<v$ for all $\bfx \in \mathcal{X}.$ Then
\begin{equation}\label{eqn:approx_nystrom_bellabas}
    \Exp{Z\sim \nu}{t} \leq (M+1) \sum_{m=M+1}^N \lambda_m(\Kff) + 2Nv \epsilon.
\end{equation}
\end{cor}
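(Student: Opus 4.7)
The plan is to combine Lemma~\ref{lem:detsample} (which controls $\E_\mu[t]$ when $\mu$ is the exact $k$-DPP) with a standard inequality that transfers expectations across measures at small total variation distance. Concretely, I would argue that $t$, viewed as a function of the random subset $Z$, is uniformly bounded, so replacing $\mu$ by an $\epsilon$-close $\nu$ perturbs $\E[t]$ by at most a term linear in $\epsilon$ and in the sup-norm of $t$.

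First I would establish a deterministic upper bound on $t$ that is independent of $Z$. Since $Q_{ff} = \Kuf^{\mathrm{\textsf{\tiny T}}} \Kuu^{-1} \Kuf$ is positive semidefinite, we have $t = \Tr(\Kff) - \Tr(Q_{ff}) \leq \Tr(\Kff) = \sum_{i=1}^N k(\bfx_i,\bfx_i) \leq Nv,$ using the assumption that $k(\bfx,\bfx) < v$ pointwise. Trivially $t \geq 0$ because $\tilde k = \Kff - Q_{ff}$ is PSD, so $t$ takes values in $[0, Nv]$ regardless of which $M$-subset $Z \subseteq \bfX$ is chosen.

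Next I would invoke the standard coupling / variational characterization of total variation: for any real-valued measurable function $h$ on the finite set of $M$-subsets,
\[
\bigl| \E_\mu[h(Z)] - \E_\nu[h(Z)] \bigr| \;\leq\; 2 \|h\|_\infty \, d_{TV}(\mu,\nu).
\]
Applying this with $h(Z) = t(Z)$, $\|h\|_\infty \leq Nv$, and $d_{TV}(\mu,\nu) \leq \epsilon$ gives
\[
\E_{Z\sim\nu}[t] \;\leq\; \E_{Z\sim\mu}[t] + 2Nv\epsilon.
\]
Combining this with Lemma~\ref{lem:detsample}, which bounds $\E_{Z\sim\mu}[t] \leq (M+1)\sum_{m=M+1}^N \lambda_m(\Kff),$ immediately yields the claimed inequality.

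There is essentially no hard step here; the only point that requires care is the choice of constant in the TV inequality (it depends on the normalization convention for $d_{TV}$), and making sure the uniform bound on $t$ is written in a way that does not accidentally depend on $\Kuu$ being well-conditioned—using PSD-ness of $Q_{ff}$ avoids that concern entirely. The full calculation is short enough to be relegated to the appendix, as the authors do.
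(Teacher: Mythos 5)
Your proof is correct and takes essentially the same route as the paper's: write $\Exp{Z\sim \nu}{t}$ as $\Exp{Z\sim \mu}{t}$ plus a difference term, bound the first with Lemma~\ref{lem:detsample}, and bound the second by $2Nv\epsilon$ using the uniform bound $t(Z)\leq \Tr\left(\Kff\right)\leq Nv$ together with the discrete total-variation identity $\sum_{Z}\lvert\mu(Z)-\nu(Z)\rvert = 2\,d(\mu,\nu)_{TV}$. Your explicit justification that $t\in[0,Nv]$ via positive semidefiniteness of $\Qff$ is a slightly more careful rendering of the step the paper states only as replacing $t(Z)$ by a bound on its maximum.
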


\subsection{A priori bounds for inducing points}

We show analogues of \cref{thm:boundsforeigfeats,thm:avgeigencase} for inducing points.

\begin{theorem}\label{thm:generalexplicit}
Suppose $N$ training inputs are drawn i.i.d according to input density $p(\bfx),$ and $k(\bfx, \bfx)<v$ for all $\bfx \in \mathcal{X}.$ Sample $M$ inducing points from the training data with the probability assigned to any set of size $M$ equal to the probability assigned to the corresponding subset by an $\epsilon$ k-DPP with $k=M$. With probability at least $1-\delta,$ 
\begin{equation}
	\KL{Q}{\hat{P}} \leq \frac{C(M+1)+2Nv\epsilon}{2\sn^2\delta}\left(1+ \frac{\|\bfy\|^2_2}{\sn^2}\right) 
\end{equation}
where $C= N\sum_{m=M+1}^\infty \lambda_m,$ and $\lambda_m$ are the eigenvalues of the integral operator $\mcK$ associated to kernel, $k,$ and $p(\bfx).$
\end{theorem}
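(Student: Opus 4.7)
The plan is to chain three already-developed tools: the deterministic bound of \Cref{lem:KLbound}, the conditional-on-data bound of \Cref{cor:approxkdpp}, and the expected-trace calculation from the proof of \Cref{thm:boundsforeigfeats}, and then conclude with Markov's inequality. \Cref{lem:KLbound} reduces everything to controlling $t := \Tr(\Kff-\Qff)$ via
\begin{equation*}
    \KL{Q}{\hat{P}} \leq \frac{t}{2\sn^2}\left(1 + \frac{\|\bfy\|_2^2}{\sn^2}\right),
\end{equation*}
so it suffices to show that $t \leq \bigl((M+1)C + 2Nv\epsilon\bigr)/\delta$ with probability at least $1-\delta$ over the joint draw of the training inputs $\bfX$ and the $\epsilon$ k-DPP sample $Z$.

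Conditional on $\bfX$, \Cref{cor:approxkdpp} gives $\Exp{Z\mid\bfX}{t} \leq (M+1)\sum_{m=M+1}^N \lambda_m(\Kff) + 2Nv\epsilon$. The remaining task is to average the empirical tail eigenvalues of $\Kff$ over $\bfX$ and bound the result by $C = N\sum_{m=M+1}^\infty \lambda_m$. I would route this through the eigenfunction approximation $\Qff^{\mathrm{eig}}$ of \Cref{sec:eigenbounds}. Since $\Kff-\Qff^{\mathrm{eig}}$ has $(i,j)$-entry $\sum_{m=M+1}^\infty \lambda_m\phi_m(\bfx_i)\phi_m(\bfx_j)$, it is positive semi-definite and the proof of \Cref{thm:boundsforeigfeats} already gives $\Exp{\bfX}{\Tr(\Kff-\Qff^{\mathrm{eig}})} = C$. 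Because $\Qff^{\mathrm{eig}}$ has rank at most $M$, Weyl's inequality applied to the splitting $\Kff = (\Kff-\Qff^{\mathrm{eig}}) + \Qff^{\mathrm{eig}}$ yields $\lambda_{M+j}(\Kff) \leq \lambda_j(\Kff-\Qff^{\mathrm{eig}})$ for each $j\geq 1$, and summing the first $N-M$ of these (using positive semi-definiteness of $\Kff-\Qff^{\mathrm{eig}}$ to drop the rest) gives
\begin{equation*}
    \sum_{m=M+1}^N \lambda_m(\Kff) \leq \Tr\bigl(\Kff-\Qff^{\mathrm{eig}}\bigr).
\end{equation*}
Taking expectations over $\bfX$ therefore produces $\Exp{\bfX,Z}{t} \leq (M+1)C + 2Nv\epsilon$. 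Since $t\geq 0$, Markov's inequality converts this expectation bound into the tail bound on $t$ claimed above, and substitution into \Cref{lem:KLbound} yields the theorem.

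The main obstacle is exactly the spectral comparison in the second paragraph: \Cref{cor:approxkdpp} leaves us with the data-dependent quantity $\sum_{m>M}\lambda_m(\Kff)$, whose expectation is not directly computable from the i.i.d.~structure of $\bfX$. Detouring through $\Tr(\Kff-\Qff^{\mathrm{eig}})$ is the cleanest route I see: it is a simple linear functional of $\bfX$ whose expectation reduces by Mercer's theorem and the $L^2_p$-orthonormality of the $\phi_m$ to $C$, while the Weyl step costs nothing beyond the factor $M+1$ already incurred by k-DPP sampling. The uniform boundedness hypothesis $k(\bfx,\bfx)<v$ and the total-variation approximation error $\epsilon$ enter only through \Cref{cor:approxkdpp} and are already absorbed into the $2Nv\epsilon$ term, so no further control over the sampling distribution is needed.
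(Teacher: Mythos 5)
Your proposal is correct and is essentially the paper's own argument: both proofs chain \Cref{lem:KLbound}, then \Cref{cor:approxkdpp} conditioned on $\bfX$, then the spectral comparison $\sum_{m=M+1}^{N}\lambda_m(\Kff)\le\Tr\bigl(\Kff-\Qff^{\mathrm{eig}}\bigr)$ together with $\Exp{\bfX}{\Tr\bigl(\Kff-\Qff^{\mathrm{eig}}\bigr)}=C$ from the proof of \Cref{thm:boundsforeigfeats}, and conclude with Markov's inequality applied to $t$ over the joint draw of $(\bfX,Z)$. The only difference is cosmetic: you justify the spectral comparison with a self-contained Weyl's-inequality argument exploiting $\mathrm{rank}\bigl(\Qff^{\mathrm{eig}}\bigr)\le M$, whereas the paper invokes optimality of the rank-$M$ truncated SVD in trace norm (citing \citet[Proposition 4]{shawe2005eigenspectrum} for an alternative proof), which is the same fact established by different standard means.
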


\begin{theorem}\label{thm:avgcase}
With the assumptions and notation of \cref{thm:generalexplicit} and if $\bfy$ is distributed according to a sample from the prior generative model, with probability at least $1-\delta,$ 
\begin{equation}
	\KL{Q}{\hat{P}} \leq \frac{C(M+1)+ 2Nv\epsilon}{\delta\sn^2}.
\end{equation}
\end{theorem}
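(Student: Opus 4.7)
The plan is to chain together three independent sources of randomness — the targets $\bfy$, the inducing set $Z$, and the training inputs $\bfX$ — using results already established, and then apply Markov's inequality. Concretely, I would prove a bound on $\Exp{\bfX,Z,\bfy}{\KL{Q}{\hat{P}}}$ and then convert expectation to a high-probability statement exactly as in the proof sketch of \Cref{thm:boundsforeigfeats,thm:avgeigencase}.

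First, since $\bfy$ is drawn from the prior generative model (conditional on $\bfX$), \Cref{lem:avgKLbound} yields
\begin{equation*}
\Exp{\bfy\mid \bfX}{\KL{Q}{\hat{P}}} \;\leq\; \frac{t}{\sn^2},
\end{equation*}
where $t=\Tr(\Kff-\Qff)$. Note that this replaces the factor $(1+\|\bfy\|_2^2/\sn^2)$ appearing in \Cref{thm:generalexplicit} with simply $1$, and this is the only structural difference from the previous theorem's proof.

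Second, I would take expectation over the inducing selection $Z \sim \nu$, conditional on $\bfX$. By \Cref{cor:approxkdpp},
\begin{equation*}
\Exp{Z\mid \bfX}{t} \;\leq\; (M+1)\sum_{m=M+1}^N \lambda_m(\Kff) \;+\; 2Nv\epsilon.
\end{equation*}
Third, taking expectation over $\bfX$ (with training inputs i.i.d.\ from $p(\bfx)$), I need
\begin{equation*}
\Exp{\bfX}{\sum_{m=M+1}^N \lambda_m(\Kff)} \;\leq\; N\sum_{m=M+1}^\infty \lambda_m \;=\; C.
\end{equation*}
This follows from the variational characterization $\sum_{m=M+1}^N \lambda_m(\Kff) = \min_{A\succeq 0,\ \rank A \leq M} \Tr(\Kff - A)$: the eigenfunction-feature construction of \Cref{sec:eigenbounds} gives a rank-$M$ matrix $\Qff^\star$ whose expected trace error was computed in the proof of \Cref{thm:boundsforeigfeats} to be exactly $N\sum_{m=M+1}^\infty \lambda_m$. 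Thus the best rank-$M$ approximation can only be better in expectation.

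Combining the three inequalities via the tower property,
\begin{equation*}
\Exp{\bfX,Z,\bfy}{\KL{Q}{\hat{P}}} \;\leq\; \frac{C(M+1) + 2Nv\epsilon}{\sn^2}.
\end{equation*}
Finally, because $\KL{Q}{\hat{P}} \geq 0$, Markov's inequality applied to the joint distribution over $(\bfX, Z, \bfy)$ gives the claimed high-probability bound
\begin{equation*}
\Pr\!\left[\KL{Q}{\hat{P}} \leq \frac{C(M+1) + 2Nv\epsilon}{\delta \sn^2}\right] \;\geq\; 1-\delta.
\end{equation*}

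The only nontrivial step is the third one, bounding $\Exp{\bfX}{\sum_{m>M}\lambda_m(\Kff)}$ by the operator tail $N\sum_{m>M}\lambda_m$. I expect this to be the main obstacle to write cleanly, since one must be careful that $\Qff^\star$ built from the first $M$ eigenfunctions is a legitimate rank-$M$ positive semidefinite competitor in the Ky Fan / Schatten-trace variational principle for each realized $\bfX$, so that the minimum-over-rank-$M$ quantity is dominated pointwise (not just in expectation) by $\Tr(\Kff - \Qff^\star)$. Once that pointwise domination is in place, linearity of expectation and the identity already established in the proof of \Cref{thm:boundsforeigfeats} close the argument.
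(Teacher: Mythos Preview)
Your proposal is correct and follows essentially the same route as the paper: bound $\Exp{\bfy}{\KL{Q}{\hat P}}$ by $t/\sn^2$ via \Cref{lem:avgKLbound}, then $\Exp{Z\mid\bfX}{t}$ via \Cref{cor:approxkdpp}, then $\Exp{\bfX}{\sum_{m>M}\lambda_m(\Kff)}\le N\sum_{m>M}\lambda_m$ by comparing the optimal rank-$M$ approximation to the eigenfunction-feature approximation pointwise in $\bfX$, and finally apply Markov's inequality. The paper phrases the third step in words (and footnotes an alternative reference), whereas you make the Ky Fan variational principle explicit; one minor cleanup is that your stated minimization should additionally impose $A\preceq\Kff$ (or work in trace norm), which you effectively use anyway since you verify $\Kff-\Qff^\star\succeq 0$.
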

\begin{proof}
We prove \cref{thm:avgcase}. \Cref{thm:generalexplicit} follows the same argument, replacing the expectation over $\bfy$ with the bound given by  \cref{lem:KLbound}. 
\begin{align}
 \Exp{\mathbf{X}}{\Exp{Z|\bfX}{\Exp{\bfy}{\KL{Q}{\hat{P}}}}} &\leq \sn^{-2}\Exp{\mathbf{X}}{\Exp{Z|\bfX}{t}} \nonumber\\
 & \hspace{-3cm}\leq \sn^{-2}(M+1) \E_\mathbf{X}\left[\sum_{m=M+1}^N \lambda_m(\Kff)\right] + 2Nv\epsilon\nonumber\\
 & \hspace{-3cm}\leq \sn^{-2} (M+1)N \sum_{m=M+1}^\infty \lambda_m + 2Nv\epsilon. \nonumber
\end{align}
The first two inequalities use \cref{lem:avgKLbound} and corollary \ref{cor:approxkdpp}. The third follows from noting that the sum inside the expectation is the error in trace norm of the optimal rank $M$ approximation to the covariance matrix for any given $\bfX$, and is bounded above by the error from the rank $M$ approximation due to eigenfunction features. We showed that this error is in expectation equal to $N \sum_{m=M+1}^\infty \lambda_m$ so this must be an upper bound on the expectation in the second to last line.\footnote{ \citet[Proposition 4]{shawe2005eigenspectrum} gives a different proof of the final inequality.}

We apply Markov's inequality, yielding for any $\delta \in (0,1)$ with probability at least $1-\delta,$
\begin{equation*}
    \KL{Q}{\hat{P}}\!\leq\! \frac{(M+1)N \sum_{m=M+1}^\infty \lambda_m+2NV\epsilon}{\delta\sn^{2}}. \qedhere
\end{equation*}
\end{proof}
\Cref{fig:kl_divergence_bound} compares the actual KL divergence, the \emph{a posteriori} bound derived by $\mcLu-\mcLl,$ and the bounds proven in theorems \ref{thm:generalexplicit} and \ref{thm:avgcase} on a dataset with normally distributed training inputs and $\bfy$ drawn from the generative model.
\section{Consequences of \cref{thm:generalexplicit} and \cref{thm:avgcase}}
We now investigate implications of our main results for sparse GP regression. Our first two corollaries consider Gaussian inputs and the squared exponential (SE) kernel, and show that in $D$ dimensions, choosing $M=\BigO(\log^D(N))$ is sufficient in order for the KL divergence to converge with high probability. We then briefly summarize convergence rates for other stationary kernels. Finally we point out consequences of our definition of convergence for the quality of the pointwise posterior mean and uncertainty. 


\subsection{Comparison of consequences of theorems}
Using the explicit formula for the eigenvalues given in \cref{sec:segauss1d}, we arrive at the following corollary:
\begin{cor}\label{cor:gaussklM}
Suppose that $\|\bfy\|_2^2 \leq  RN.$ Fix $\gamma>0,$ and take $\epsilon=\frac{\delta\sn^2}{vN^{\gamma+2}}.$ Assume the input data is normally distributed and regression in performed with a SE kernel. Under the assumptions of \cref{thm:generalexplicit},  with probability $1-\delta,$ %
\begin{align}
	\KL{Q}{\hat{P}} \leq N^{-\gamma}\left(2R/\sn^2 + 2/N\right). 
\end{align}

when inference is performed with $M=\frac{(3+\gamma)\log(N)+\log D}{\log(B^{-1})},$ where $D =\frac{v\sqrt{2a}}{2\sqrt{A}\sn^2\delta(1-B)}.$ 
\end{cor}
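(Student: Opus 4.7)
The plan is to specialize Theorem~\ref{thm:generalexplicit} by plugging in the closed-form SE eigenvalues from Section~\ref{sec:segauss1d}, the prescribed $M$, and the prescribed $\epsilon$, then simplify.

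First, I would use the geometric-series formula from Section~\ref{sec:segauss1d} to write
\[
C \;=\; N\sum_{m=M+1}^\infty \lambda_m \;=\; N\cdot\frac{v\sqrt{2a}}{(1-B)\sqrt{A}}\,B^M \;=\; 2 s_n^2 \delta\, N\,D\, B^M,
\]
where the last step uses the definition $D=\frac{v\sqrt{2a}}{2\sqrt{A}\,s_n^2\delta(1-B)}$. The stated choice $M=\bigl((3+\gamma)\log N + \log D\bigr)/\log(B^{-1})$ is tailored so that $M\log B^{-1} = (3+\gamma)\log N + \log D$, which gives $B^M = D^{-1}N^{-(3+\gamma)}$ and therefore $C = 2 s_n^2 \delta\, N^{-(2+\gamma)}$.

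Next I would handle the $\epsilon$-term: the choice $\epsilon=\delta s_n^2/(vN^{\gamma+2})$ yields $2Nv\epsilon = 2\delta s_n^2 N^{-(\gamma+1)}$. Substituting both pieces into Theorem~\ref{thm:generalexplicit} and using $\|\mathbf{y}\|_2^2\le RN$ gives, with probability at least $1-\delta$,
\begin{align*}
\KL{Q}{\hat{P}}
&\le \frac{2 s_n^2\delta\,N^{-(2+\gamma)}(M+1) + 2\delta s_n^2 N^{-(\gamma+1)}}{2 s_n^2 \delta}\Bigl(1+\tfrac{RN}{s_n^2}\Bigr)\\
&= \Bigl(N^{-(2+\gamma)}(M+1) + N^{-(\gamma+1)}\Bigr)\Bigl(1+\tfrac{RN}{s_n^2}\Bigr).
\end{align*}

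Finally I would use $M+1\le N$ (valid for $N$ large enough, since $M$ grows only logarithmically) to absorb the $(M+1)$ factor: each of the four cross-terms after expansion is bounded by either $R/(s_n^2 N^\gamma)$ or $1/N^{\gamma+1}$, giving the claimed $N^{-\gamma}(2R/s_n^2 + 2/N)$. The only real subtlety is checking that the choice of $M$ exactly cancels the $N$ prefactor in $C$ and yields enough extra powers of $N$ to absorb both the $(M+1)$ factor from Theorem~\ref{thm:generalexplicit} and the $\|\mathbf{y}\|_2^2/s_n^2=O(N)$ factor; the constant $3+\gamma$ in the definition of $M$ is what makes this work (one power of $N$ to cancel the $N$ in $C$, one to absorb $\|\mathbf{y}\|^2$, one to absorb $M+1\le N$, and $\gamma$ to spare for the final rate). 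No step is really hard; the main bookkeeping is just verifying that the three factors of $N$ spent above leave exactly $N^{-\gamma}$ behind.
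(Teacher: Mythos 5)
Your proposal is correct and follows essentially the same route as the paper's own proof: substitute the geometric-series tail of the SE eigenvalues and the prescribed $\epsilon$ into \cref{thm:generalexplicit}, use $B^M = D^{-1}N^{-(3+\gamma)}$ so that $C(M+1)/(2\sn^2\delta) \le N^{-1-\gamma}$, and expand against $1+\|\bfy\|_2^2/\sn^2 \le 1+RN/\sn^2$. The one small difference is how $M+1\le N$ is justified: you assume $N$ is large enough that the logarithmically growing $M$ stays below $N$, whereas the paper covers every $N$ by observing that if $M\geq N$ the sparse approximation is exact and the KL divergence is zero, so the bound holds trivially in that case.
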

The proof is given in \cref{app:cors}. If the lengthscale is much shorter than the standard deviation of the data then $B$ will be near 1, implying that $M$ will need to be large in order for the bound to converge.
\begin{rem}
The assumption $\|\bfy\|_2^2 \leq RN$ for some $R$ is very weak. For example, if $\bfy$ is a realization of an integrable function with constant noise, 
\[
\sum_{i=1}^N y_i^2 \leq \sum_{i=1}^N f(\bfx_i)^2 +\sum_{i=1}^N \epsilon_i^2  + o(N)
\]
The first sum is asymptotically $N\int f(\bfx)p(\bfx)dx,$ and the second is asymptotically $N\sn^2.$
\end{rem}
The consequence of corollary \ref{cor:gaussklM} is shown in \cref{fig:increasing_n}, in which we gradually increase $N,$ choosing $M=C\log(N)+C_0,$ and see the KL divergence converges as an inverse power of $N.$ The training outputs are generated from a sample from the prior generative model. Note that as \cref{thm:avgcase} assumes $\bfy$ is sampled from the prior and is not derived using the upper bound (\cref{eqn:titsiasupper}); it may be tighter than the a posteriori bound in cases when this upper bound is not tight.

For the SE kernel and Gaussian inputs, the rate that we prove $M$ must increase for inducing points and eigenfunction features differs by a constant factor. For the Mat\'ern $k+1/2$ kernel in one dimension, we need to choose $M=N^{\alpha}$ with $\alpha > 1/(2k)$ instead of $\alpha > 1/(2k+1).$ This difference is particularly stark in the case of the Mat\'ern $3/2$ kernel, for which our bounds tell us that inference with inducing points requires $\alpha>1/2$ as opposed to $\alpha>1/3$ for the eigenfunction features. Whether this is an artifact of the proof, the initialization scheme, or an inherent limitation for inducing points is an interesting area for future work.

\begin{figure}
    \centering
    \includegraphics[width=0.46\textwidth]{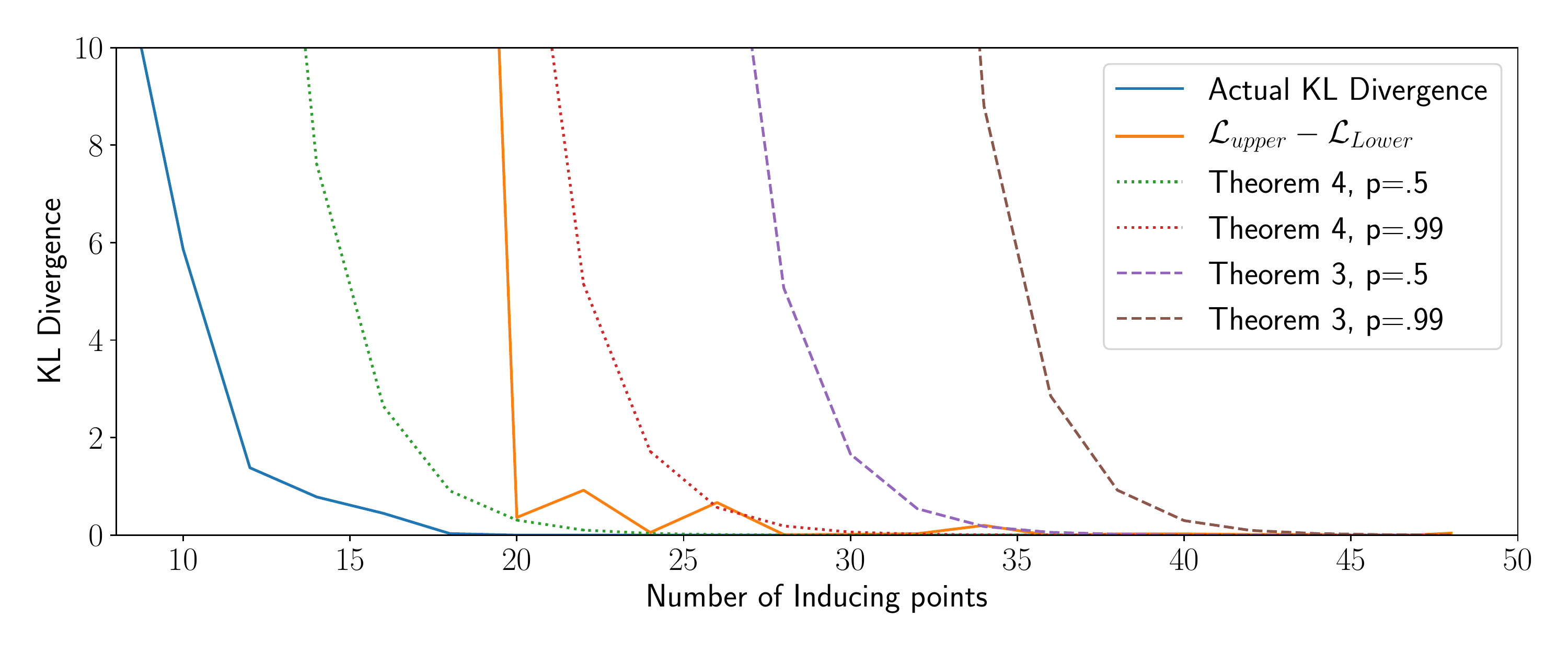}\vspace{-.5cm}
    \caption{Rates of convergence as $M$ increases on fixed dataset of size $N=1000$, with a SE-kernel with $\ell=.6, v=1, \sn =1$ and $x \sim \mathcal{N}(0,1)$ and $\bfy$ sampled from the prior.}
    \label{fig:kl_divergence_bound}
\end{figure}
\begin{figure}
    \centering
    \includegraphics[width=0.44\textwidth]{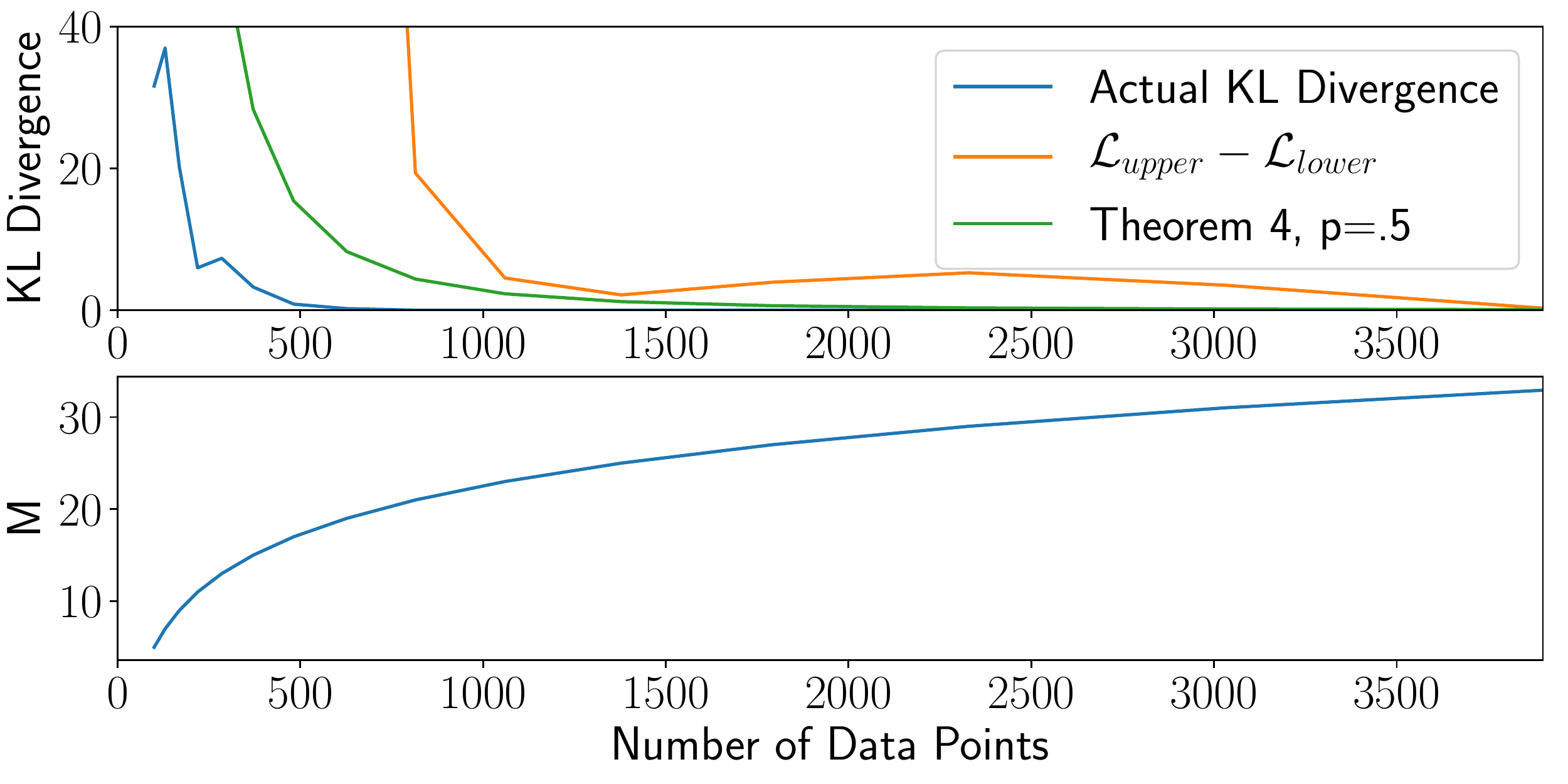}\vspace{-.5cm}
    \caption{We increase $N$ and take $M = C\log(N)$ for a one-dimensional SE-kernel and normally distributed inputs. The KL divergence decays rapidly, as predicted by Corollary \ref{cor:gaussklM}.}
    \label{fig:increasing_n}
\end{figure}

\subsection{Multidimensional data, effect of input density and other kernels}
If $\mathcal{X}=\R^D,$ it is common to choose a \emph{separable kernel}, i.e a kernel that can be written as a product of kernels along each dimension. If this choice is made, and input densities factor over the dimensions, the eigenvalues of $\mathcal{K}$ are the product of the eigenvalues along each dimension. 
In the case of the SE-ARD kernel and Gaussian input distribution, we obtain an analogous statement to corollary \ref{cor:gaussklM} in D-dimensions. 
\begin{cor}\label{cor:gaussMD}
For any fixed $\epsilon',\delta>0$ under the assumptions of corollary \ref{cor:gaussklM}, with a SE-ARD kernel in $D$ dimensions, $p(\bfx)$ a multivariate Gaussian, $M=\BigO(\log^D N)$ inducing points and $\epsilon=\BigO(N^{-\gamma})$ for some fixed $\gamma>2,$ with probability at least $1-\delta, \, \, \KL{Q}{\hat{P}} \leq \delta^{-1}\epsilon'.$

\end{cor}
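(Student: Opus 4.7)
The plan is to exploit the tensor structure of the SE-ARD kernel under a factorizing Gaussian input density to reduce the $D$-dimensional eigenvalue decay to a product of one-dimensional geometric decays, then apply \cref{thm:generalexplicit}.

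First, since $k(\bfx,\bfx')=\prod_{d=1}^D k_d(x_d,x'_d)$ and $p(\bfx)=\prod_{d=1}^D p_d(x_d)$ with each factor a 1D SE kernel and a 1D Gaussian respectively, the operator $\mcK$ is the tensor product of one-dimensional operators. Its eigenpairs are indexed by multi-indices $\mathbf{m}\in\mathbb{N}^D$ with eigenvalues factoring as $\lambda_{\mathbf{m}}=\prod_{d=1}^D\lambda_{m_d}^{(d)}$, where each 1D factor is of the form $\lambda_m^{(d)}=c_d B_d^{m-1}$ for some $B_d\in(0,1)$ by the closed form in \cref{sec:segauss1d}.

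Next I bound the sorted-sequence tail by a box argument. For the cube $\mathcal{S}_k=\{1,\dots,k\}^D$, summing over which coordinate first exceeds $k$ gives
\[
\sum_{\mathbf{m}\notin\mathcal{S}_k}\lambda_{\mathbf{m}}\le D\,B_{\max}^{k}\prod_{d=1}^D\frac{c_d}{1-B_d}=:C_0\,B_{\max}^{k},
\]
with $B_{\max}=\max_d B_d<1$. Since the top $|\mathcal{S}_k|=k^D$ eigenvalues sum to at least the mass contained in $\mathcal{S}_k$, the sorted tail satisfies $\sum_{m>k^D}\lambda_m\le C_0 B_{\max}^{k}$. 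Setting $M=k^D$, the constant $C=N\sum_{m>M}\lambda_m$ appearing in \cref{thm:generalexplicit} is therefore at most $C_0 N B_{\max}^{M^{1/D}}$. Plugging this in together with the remark's bound $\|\bfy\|_2^2\le RN$ and the assumed $\epsilon=\BigO(N^{-\gamma})$ with $\gamma>2$ yields
\[
\KL{Q}{\hat{P}}\le\BigO\!\left(\frac{N^2 M\,B_{\max}^{M^{1/D}}+N^{2-\gamma}}{\delta}\right).
\]

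Finally I take $k=A\log N$ with $A$ large enough that $A\log(1/B_{\max})>2$, so that $M=A^D\log^D N$ and $B_{\max}^{M^{1/D}}=N^{-A\log(1/B_{\max})}$. Both summands above then decay polynomially in $N$, so for any prescribed $\epsilon'>0$ the bracket is at most $\epsilon'$ once $N$ is sufficiently large, giving $\KL{Q}{\hat{P}}\le\delta^{-1}\epsilon'$ with probability at least $1-\delta$, as required. The main obstacle is the tail-bounding step: in the ARD case the sorted eigenvalue sequence does not coincide with a simplex slice $\{|\mathbf{m}|\le k\}$, so one cannot count using a single multinomial coefficient; the box $\mathcal{S}_k$ side-steps this by sacrificing only a factor of $D$ that is absorbed into $C_0$, and crucially preserves the $B_{\max}^{M^{1/D}}$ rate that beats any polynomial in $N$.
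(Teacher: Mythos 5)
Your proof is correct, but it takes a genuinely different route from the paper's. You exploit the tensor-product structure of the SE-ARD kernel directly: the eigenvalues factor as products of one-dimensional geometric sequences, and your box-counting union bound (over which coordinate of the multi-index first exceeds $k$) gives the sorted-tail estimate $\sum_{m>k^D}\lambda_m \leq C_0 B_{\max}^{k}$ using nothing beyond geometric series and the fact that the top $k^D$ sorted eigenvalues majorize the sum over any $k^D$ multi-indices. The paper instead first reduces the ARD case to an \emph{isotropic} one (dominating the eigenvalues by those of an operator with all lengthscales equal to the shortest and the input standard deviation set to the largest marginal one), then invokes the sorted-eigenvalue bound $\lambda_{s+D-1}\leq(2a/A)^{D/2}B^{s^{1/D}}$ of \citet{seeger2008information}, compares the tail sum to the integral $\int_M^\infty B^{s^{1/D}}\,ds$, and evaluates it as an incomplete gamma function $\Gamma\left(D,\alpha M^{1/D}\right)$ via its closed form for integer $D$. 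Both arguments produce the same exponential rate $B^{M^{1/D}}$ up to polynomial prefactors, so both lead to $M=\BigO(\log^D N)$, and both finish identically by plugging into \cref{thm:generalexplicit} with $\|\bfy\|_2^2\leq RN$ and $\epsilon=\BigO(N^{-\gamma})$, $\gamma>2$. What your approach buys: it is self-contained (no external spectral bound, no special functions) and it handles anisotropic lengthscales natively, paying only a benign factor of $D$ and settling for the worst-dimension rate $B_{\max}$ --- which is effectively what the paper's isotropic reduction concedes anyway. What the paper's approach buys: more explicit constants along the way, and the same simplex-counting machinery is reused in the appendix to argue a matching \emph{lower} bound showing the $\log^D N$ scaling is unavoidable, which your box bound does not immediately give. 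One shared caveat: both arguments need the input density to factor across dimensions; a multivariate Gaussian with correlated components does not, and the paper only addresses this in a footnote, while your write-up assumes factorization tacitly --- worth stating explicitly.
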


\begin{table*}[ht]
\caption{The number of features needed for our bounds to converge in $D-$dimensions. These hold for some fixed $\alpha>0$ and any $\epsilon'>0$.}
\vspace{-0.25cm}
\label{table:numfeats}
\vskip 0.15in
\begin{center}
\begin{small}
\begin{sc}
\begin{tabular}{lcccr}
\toprule
Kernel & Input Distribution & Decay of $\lambda_m$ & M, Theorem \ref{thm:generalexplicit} & M, Theorem \ref{thm:avgcase}  \\
\midrule
SE-Kernel   & Compact Support &  $\BigO\left(\exp(-\alpha\frac{m}{d}\log\frac{m}{d})\right)$ &$\BigO(\log^D(N))$ & $\BigO(\log^D(N))$\\
SE-Kernel &  Gaussian &  $\BigO\left(\exp(-\alpha\frac{m}{d}\right))$  &$\BigO(\log^D(N))$ & $\BigO(\log^D(N))$\\
Mat\'{e}rn $k$+1/2\footnotemark   & Uniform on Interval & $\BigO\left(M^{-2k-2}\log(M)^{2(d-1)(k+1)}\right)$  & $\BigO\left(N^{1/k+\epsilon'}\right)$ & $\BigO\left(N^{1/(2k)+\epsilon'}\right)$\\
\bottomrule
\end{tabular}
\end{sc}
\end{small}
\end{center}
\vskip -0.1in
\vspace{-0.25cm}
\end{table*}

The proof uses ideas from \citet{seeger2008information} and is given in \cref{app:cors}. While for the SE kernel and Gaussian input density $M$ can grow polylogarithmically in $N,$ and the KL divergence still converges, this is not the case for regression with other kernels or input distribution.

Closed form expressions for the eigenvalues of operators associated to many kernels and input distributions are not known. For stationary kernels and compactly supported input distributions, the asymptotic rate of decay of the eigenvalues of $\mcK$ is well-understood \cite{widom_asymptotic_1963,widom1964asymptotic,rittermulti1995}. The intuitive summary of these results is that smooth kernels, with concentrated input distributions have rapidly decaying eigenvalues. In contrast, kernels such as the Mat\'ern-1/2 that define processes that are not smooth have slowly decaying eigenvalues. For Lebesgue measure on $[a,b]$ the Sacks-Ylivasker conditions of order $r$ (\cref{app:SY}), which can be roughly thought of as meaning that realizations of the process are $r$ times differentiable with probability 1 \citep{rittermulti1995}, implies an eigendecay of $\lambda_m \asymp m^{-2r-2}.$ \Cref{table:numfeats} summarizes the spectral decay of several stationary kernels, as well as the implications for the number of inducing points needed for inference to provably converge with our bounds.

\subsection{Computational complexity}

We now have all the components necessary for analyzing the overall computational complexity of finding an arbitrarily good GP approximation.
To understand the full computational complexity, we must consider the cost of initializing the inducing points using an exact or approximate k-DPP, as well as the $\BigO(NM^2)$ time complexity of variational inference. Recent work of \citet{exactkdpp} indicate that an exact algorithm for sampling a k-DPP can be implemented in $\BigO(N\log(N)\text{poly}(M)).$ We base our method on \citet{pmlr-v49-anari16}, who show that an $\epsilon$ k-DPP can be sampled via MCMC methods in $\BigO(NM^4\log(N)\!+\!NM^3\log(\frac{1}{\epsilon}))$ time with memory $\BigO(N\!+\!M^2)$ (see \cref{app:algorithm}).\footnotetext{The kernel discussed here is the product of 1-dimensional Mat\'{e}rn kernels with the same smoothness parameter, which differs from the standard definition if $D>1$. Bounds on the eigenvalues of the standard multidimensional Mat\'ern kernel appear in \citet{seeger2008information}.}\footnote{Open source implementations of approximate k-DPPs are available (e.g. \cite{GaBaVa18}).} We can choose $\epsilon$ to be any inverse power of $N$ which only adds a constant factor to the complexity. For the SE kernel, taking $M\!=\!\BigO(\log^D\!N)$ inducing points leads to a complexity of $\BigO(N\log^{4D+1}N),$ a large computational saving compared to the $\BigO(N^3)$ cost of exact inference. For the Mat\'ern $k\!+\!\frac{1}{2}$ kernel in one-dimension and the average case analysis of \cref{thm:avgcase} we need to take $M\!=\!\BigO(N^{1/(2k)+\epsilon'})$ implying a computational complexity of $\BigO(N^{1+2/k+4\epsilon'}\log(N))$ which is an improvement over the cost of full inference for $k\!>\!1.$ Improvements in methods for sampling exact or approximate k-DPPs (e.g. recent bounds on mixing time \citep{Hermon2019ModifiedLI}) or bounds on other selection schemes for Nystr\"om approximations directly translate to improved bounds on the computational cost of convergent sparse Gaussian process approximations through this framework. 

\subsection{Pointwise approximate posterior}
In many applications, pointwise estimates of the posterior mean and variance are of interest. It is therefore desirable that the approximate variational posterior gives similar estimates of these quantities as the true posterior. \citet{Huggins2018ScalableGP} derived an approximation method for sparse GP inference with provable guarantees about pointwise mean and variance estimates of the posterior process and showed that approximations with moderate KL divergences can still have large deviations in mean and variance estimates. However, if the KL divergence converges to zero, estimates of mean and variance converge to the posterior values. By the chain rule of KL divergence \cite{matthews_sparse_2016},
\begin{align*}
&\KL{\mu_{\mathcal{X}}}{\nu_{ \mathcal{X}}} = \KL{\mu_{\bfx_{*}}}{\nu_{\bfx_{*}}} \\&+ \Exp{\mu_{\bfx_{*}}}{ \KL{\mu_{\mathcal{X}\backslash\bfx_{*}|\bfx_{*}}}{ \nu_{\mathcal{X}\backslash\bfx_{*}|\bfx_{*}}}} \geq \KL{\mu_{\bfx_{*}}}{\nu_{\bfx_{*}}}.
\end{align*}
 Therefore, bounds on the mean and variance of a one-dimensional Gaussian with a small KL divergence imply pointwise guarantees about posterior inference when the KL divergence between processes is small.

\begin{prop}\label{prop:pointwise}
Suppose $q$ and $p$ are one dimensional Gaussian distributions with means $\mu_1$ and $\mu_2$ and variances $\sigma_1$ and $\sigma_2,$ such that $2\KL{q}{p}= \varepsilon \leq \frac{1}{5},$ then 
\begin{align*}
\lvert\mu_1-\mu_2\rvert \leq \sigma_2\sqrt{\varepsilon} \leq \frac{\sigma_1\sqrt{\varepsilon}}{\sqrt{1-\sqrt{3\varepsilon}}}
\text{ and }
\lvert 1- \sigma_1^2/\sigma_2^2 \rvert<\sqrt{3\varepsilon}.
\end{align*}
\end{prop}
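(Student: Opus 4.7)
The plan is to express the KL divergence between two one-dimensional Gaussians in closed form, decompose it into the mean-shift term and a variance-mismatch term, and exploit the non-negativity of each piece to obtain the two required bounds.

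Writing
\[
2\KL{q}{p} = \frac{\sigma_1^2}{\sigma_2^2} - 1 - \log\frac{\sigma_1^2}{\sigma_2^2} + \frac{(\mu_1-\mu_2)^2}{\sigma_2^2} = \varepsilon,
\]
set $x = \sigma_1^2/\sigma_2^2$ and $f(x) := x - 1 - \log x$, so that $\varepsilon = f(x) + (\mu_1-\mu_2)^2/\sigma_2^2$. Since $f(x)\ge 0$ for all $x>0$, we immediately read off
\[
\frac{(\mu_1-\mu_2)^2}{\sigma_2^2} \le \varepsilon,
\]
giving the first mean inequality $|\mu_1-\mu_2| \le \sigma_2\sqrt{\varepsilon}$, and also $f(x) \le \varepsilon$. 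The rest of the proof is to convert $f(x)\le \varepsilon$ into the two-sided bound $|1-x|\le \sqrt{3\varepsilon}$, from which the variance claim follows directly and the chained mean bound is obtained from $\sigma_2^2 \le \sigma_1^2/(1-\sqrt{3\varepsilon})$.

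I would handle the two sides of $x=1$ separately. For $0<x\le 1$, write $x=1-v$ with $v\in[0,1)$ and expand $-\log(1-v)$ as a power series to obtain $f(1-v)\ge v^2/2$; this immediately gives $|1-x|\le\sqrt{2\varepsilon}\le\sqrt{3\varepsilon}$. For $x\ge 1$, the bound is tighter and must be handled carefully. Substituting $t = \sqrt{3\varepsilon}$ (so $\varepsilon = t^2/3$), the desired implication ``$x\ge 1+t \Rightarrow f(x)\ge \varepsilon$'' reduces, by monotonicity of $f$ on $[1,\infty)$, to showing
\[
h(t) := t - \log(1+t) - \tfrac{t^2}{3} \ge 0 \quad \text{for } t \in [0, \sqrt{3/5}].
\]
A direct derivative computation gives $h'(t) = t(1-2t)/[3(1+t)]$, which is non-negative for $t\le 1/2$ and non-positive for $t\ge 1/2$, so $h$ is unimodal on the relevant interval with $h(0)=0$. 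Hence it suffices to verify $h(\sqrt{3/5})\ge 0$, which is the constraint that pins down the hypothesis $\varepsilon \le 1/5$.

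The main obstacle is precisely this endpoint verification on the $x>1$ side: the constant $\sqrt{3\varepsilon}$ is essentially tight at $\varepsilon=1/5$, so the inequality $\log(1+\sqrt{3/5}) \le \sqrt{3/5} - 1/5$ must be checked with care (e.g.\ via a numerical bound on $\log(1+\sqrt{3/5})$, or by a rational upper bound on the logarithm). Once this is in hand, combining the two sides gives $|1-\sigma_1^2/\sigma_2^2|\le \sqrt{3\varepsilon}$, which in turn yields $\sigma_2 \le \sigma_1/\sqrt{1-\sqrt{3\varepsilon}}$ and chains with the first mean bound to produce the full statement of the proposition.
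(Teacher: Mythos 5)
Your proof is correct and takes essentially the same route as the paper: both start from the closed-form decomposition $\varepsilon = f(x) + (\mu_1-\mu_2)^2/\sigma_2^2$ with $x = \sigma_1^2/\sigma_2^2$ and $f(x) = x - 1 - \log x$, read the mean bound off the non-negativity of $f$, and then convert $f(x) \le \varepsilon$ into $\lvert 1 - x \rvert \le \sqrt{3\varepsilon}$ using the hypothesis $\varepsilon \le 1/5$. The only difference is in how that last implication is verified --- the paper localizes $x \in [0.493, 1.77]$ from $x - \log x < 1.2$ and then invokes the quadratic lower bound $f(x) \ge (x-1)^2/3$ on that interval, whereas you split at $x=1$ (a series bound for $x \le 1$, and for $x \ge 1$ a monotonicity reduction to the scalar inequality $h(t) = t - \log(1+t) - t^2/3 \ge 0$ on $[0,\sqrt{3/5}]$, whose endpoint check does hold, with margin roughly $10^{-3}$); both verifications are equally tight near $\varepsilon = 1/5$, yours makes explicit exactly where the hypothesis bites, and your derivation of the mean bound cleanly yields the stated $\sigma_2\sqrt{\varepsilon}$ (the paper's proof writes $\sigma_2\sqrt{3\varepsilon}$ there, an apparent typo).
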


The proof is in \cref{app:KLdiv}. If $\epsilon \to 0,$ proposition \ref{prop:pointwise} implies $\mu_1 \to \mu_2$ and $\sigma_1 \to \sigma_2.$ Using this and  \cref{thm:generalexplicit,thm:avgcase}, \emph{the posterior mean and variance converge pointwise to those of the full model using $M\ll N$ inducing features}.

\section{Related work}
 Statistical guarantees for convergence of parametric GP approximations \citep{zhu_gaussian_1997, ferrari-trecate_finite-dimensional_1999}, lead to similar conclusions about the choice of approximating rank. \citet{ferrari-trecate_finite-dimensional_1999} showed that given $N$ data points, using a rank $M$ truncated SVD of the prior covariance matrix, such that $\lambda_M \ll \sn^2/N$ results in almost no change in the model, in terms of expected mean squared error. Our results can be considered the equivalent for variational inference, showing that theoretical guarantees can be established for \emph{non-parametric} approximate inference. 


 Guarantees on the quality Nystr\"om approximations have been used to bound the error of approximate kernel methods, notably for kernel ridge regression \cite{AlaouiFast2015,li2016fast}. The specific method for selecting columns in the Nystr\"om approximation plays a large role in these analyses. \Citet{li2016fast} use an approximate k-DPP, nearly identical to the initialization we analyze; \citet{AlaouiFast2015} sample columns according to ridge leverage scores. The substantial literature on bounds for Nystr\"om approximations \citep[e.g.][]{gittens_revisiting_2013} motivates considering other initialization schemes for inducing points in the context of Gaussian process regression.
\section{Conclusion}

We proved bounds on the KL divergence between the variational approximation of sparse GP regression to the posterior, that depend only on the decay of the eigenvalues of the covariance operator. These bounds prove the intuitive result that \emph{smooth kernels with training data concentrated in a small region admit high quality, very sparse approximations}. These bounds prove that \emph{truly sparse non-parametric inference, with $M\ll N,$} can provide reliable estimates of the marginal likelihood and pointwise posterior. 

Extensions to models with non-conjugate likelihoods, especially within the framework of \citet{hensman_scalable_2015}, pose a promising direction for future research.
\section*{Acknowledgements}

We would like to thank James Hensman for providing an excellent research environment, and the reviewers for their helpful feedback and suggestions.  
We would also particularly like to thank Guillaume Gautier, for pointing out an error in the exact k-DPP sampling algorithm cited in an earlier version of this work, and for guiding us through recent work on sampling k-DPPs.

\bibliography{icml2019IP}
\bibliographystyle{icml2019}

\appendix
\section{Proof Of \Cref{lem:KLbound}}\label{app:lemma1}
\citet{titsias_variational_2014} proves the tighter upper bound,
\begin{multline*}
\mcL \leq \mcLu' := - \frac{N}{2}\log(2\pi)  \\- \frac{1}{2}\log\left(\detbar{{\bf Q}_n}\right) - \frac{1}{2} \bfy\transpose\left({\bf Q}_n+ \widetilde{\lambda}_{max}\bfI \right)^{-1}\bfy.
\end{multline*}
Subtracting,
\begin{align}\label{eqn:Ldif}
&\mcLu' - \mcLl \nonumber\\
&=\frac{t}{2\sn^2} + \frac{1}{2}\left(\bfy\transpose \left({\bf Q}_n^{-1}- ({\bf Q}_n+\widetilde{\lambda}_{max}\bfI)^{-1}\right) \bfy\right).
\end{align}
Since $\Qff$ is symmetric positive semidefinite, ${\bf Q}_n$ is positive definite with eigenvalues bounded below by $\sn^2$. Write, ${\bf Q}_n= \bfU \bfLam \bfU\transpose,$ where $\bfU$ is unitary and $\bfLam$ is a diagonal matrix with non-increasing diagonal entries $\gamma_1\geq \gamma_2 \geq \ldots \geq \gamma_N\geq \sn^2.$ 

We can rewrite the second term (ignoring the factor of one half) in \Cref{eqn:Ldif} as,
\[
(\bfU\transpose \bfy)\transpose \left(\bfLam^{-1}- (\bfLam+\widetilde{\lambda}_{max}\bfI)^{-1}\right)(\bfU\transpose \bfy).
\]
Define, $\bfz=(\bfU\transpose \bfy).$ Since $\bfU$ is unitary, $\|\bfz\|= \|\bfy\|.$
\begin{align*}
(\bfU\transpose \bfy)\transpose \left(\bfLam^{-1}-\right.&\left. (\bfLam+t\bfI)^{-1}\right) (\bfU\transpose \bfy) \\ &=\bfz\transpose\left(\bfLam^{-1}- (\bfLam+\widetilde{\lambda}_{max}\bfI)^{-1}\right) \bfz \nonumber\\
&= \sum_i z_i^2\frac{\widetilde{\lambda}_{max}}{\gamma_i^2+\gamma_i\widetilde{\lambda}_{max}} \nonumber \\
 &\leq \|\bfy\|^2 \frac{\widetilde{\lambda}_{max}}{\gamma_N^2+\gamma_N\widetilde{\lambda}_{max}}.    
\end{align*}

The last inequality comes from noting that the fraction in the sum attains a maximum when $\gamma_i$ is minimized. Since $\sn^2$ is a lower bound on the smallest eigenvalue of ${\bf Q}_n,$ we have,
\[
\bfy^{T} \left({\bf Q}_n^{-1}- ({\bf Q}_n+\widetilde{\lambda}_{max}\bfI)^{-1}\right) \bfy \leq   \frac{\widetilde{\lambda}_{max}\|\bfy\|^2}{\sn^4+\sn^2\widetilde{\lambda}_{max}}.
\]
\cref{lem:KLbound} follows. 
\section{KL Divergence Gaussian Distributions}\label{app:KLdiv}
\subsection{KL divergence between multivariate Gaussian distributions}
We make use of the formula for KL divergences between multivariate Gaussian distributions in our proof of \cref{lem:avgKLbound}, and the univariate case in proposition \ref{prop:pointwise}. 

Recall the KL divergence from $p_1 \sim \mathcal{N}\left(\mathbf{m_1},\mathbf{S_1}\right)$ to $p_2 \sim \mathcal{N}\left(\mathbf{m_2},\mathbf{S_2}\right)$ both of dimension $N$ is given by
\begin{multline}\label{eqn:gausskldiverge}
      \KL{p_1}{p_2} = \frac{1}{2}\left(\Tr\left(\mathbf{S_2}^{-1}\mathbf{S_1}\right)+\log\left(\frac{\detbar{\mathbf{S_2}}}{\detbar{\mathbf{S_1}}}\right)\right. \\ \left. + \left(\mathbf{m_1} -\mathbf{m_2}\right)\transpose \mathbf{S_2}^{-1}\left(\mathbf{m_1} -\mathbf{m_2}\right)-N\right)
      \geq 0.
\end{multline}
The inequality is a special case of Jensen's inequality. 
\subsection{Proof of Upper Bound in \cref{lem:avgKLbound}}
In the main text we showed,
\begin{align*}
        &\Exp{y}{ \KL{Q}{\hat{P}}}  = \frac{t}{2\sn^2}+  \int \NormDist{\bfy; 0, {\bf K}_n} \\ &\qquad\times \log \left( \frac{\NormDist{\bfy; 0,{\bf K}_n}}{\NormDist{\bfy; 0,{\bf Q}_n}}\right)\calcd\bfy
\end{align*}
In order to complete the proof, we need to show that the second term on the right hand side is bounded above by $t/(2\sn^2)$. Using \Cref{eqn:gausskldiverge}:
\begin{multline}\label{eqn:avgkl}
\Exp{y}{ KL \left( Q \|\hat{P} \right)}  =  \frac{t}{2\sn^2} - \frac{N}{2}  +\frac{1}{2} \log\left(\frac{\left|{\bf Q}_n\right|}{\left|{\bf K}_n\right|}\right)\\ + \frac{1}{2}\Tr \left( {\bf Q}_n^{-1}({\bf K}_n)\right) \\
\leq  \frac{t}{2\sn^2} - \frac{N}{2}  +\frac{1}{2}\Tr \left({\bf Q}_n^{-1}({\bf Q}_n+\tilk)\right).
\end{multline}
The inequality follows from noting the log determinant term is negative, as ${\bf K}_n \succ {\bf Q}_n$ (i.e. ${\bf K}_n -{\bf Q}_n$ is positive definite). Simplifying the last term,
\begin{align*}
  \frac{1}{2}\Tr(\bfI) &+\frac{1}{2}\Tr\left({\bf Q}_n^{-1}\tilk)\right) \leq N/2 +t\lambda_1\left({\bf Q}_n^{-1}\right)/2 \\
    &\leq N/2+ t/(2\sn^2).
\end{align*}
The first inequality uses that for positive semi-definite symmetric matrices $\Tr(AB) \leq \Tr(A)\lambda_1(B)$ which is a special case of H\"older's inequality for Schatten norms. The final line uses that the largest eigenvalue of ${\bf Q}_n^{-1}$ is bounded above by $\sn^{-2}.$ Using this in \Cref{eqn:avgkl} finishes the proof.
\subsection{Proof of Proposition \ref{prop:pointwise}}
Defining $\epsilon = 2KL(q\|p),$
\begin{align}\label{eqn:1dkl}
\epsilon &= \frac{\sigma_1^2+(\mu_1-\mu_2)^2}{\sigma_2^2} - \log\left(\frac{\sigma^2_1}{\sigma^2_2}\right)-1  \\
&\geq x-\log(x)-1 \nonumber
\end{align}
where we have defined $x=\frac{\sigma^2_1}{\sigma^2_2}.$

Applying the lower bound $x - \log(x) -1 \geq (x-1)^2/2 - (x-1)^3/3,$ 
\[
\epsilon \geq (x-1)^2/2 - (x-1)^3/3.
\]

A bound on $\lvert x-1 \rvert$ that holds for all $\epsilon$ can then be found with the cubic formula. Under the assumption that $\epsilon < \frac{1}{5},$ we have $x-\log(x)<1.2$ which implies $x \in [0.493,1.77].$ For $x$ in this range, we have
\[
x - \log(x) -1 \geq (x-1)^2/3
\]
So,
\[
\lvert x-1 \rvert \leq \sqrt{3\epsilon}
\]
This proves that,
\[
1-\sqrt{3\epsilon}<\frac{\sigma_1^2}{\sigma_2^2}<1+\sqrt{3\epsilon}.
\]
From \Cref{eqn:1dkl} and $x-\log x>1$,
\[
\lvert \mu_1-\mu_2 \rvert \leq \sigma_2\sqrt{3\epsilon}.
\]
Using our bound on the ratio of the variances completes the proof of proposition \ref{prop:pointwise}.

 \section{Covariances for Interdomain Features}\label{app:covcalcs}
 We compute the covariances for eigenvector and eigenfunction inducing features. 
 \subsection{Eigenvector inducing features}
 Recall we have defined eigenvector inducing features by,
 \[
 u_m = \sum_{i=1}^N w_i^{(m)}f(\bfx_i).
 \]
 Then,
 \begin{align*}
     \cov(u_m,u_k) &= \Exp{}{\sum_{i=1}^N w_i^{(m)}f(\bfx_i)\sum_{j=1}^N w_j^{(k)}f(\bfx_j)} \\
     & = \sum_{i=1}^N w_i^{(m)}\sum_{j=1}^N w_j^{(k)}\Exp{}{f(\bfx_i)f(\bfx_j)}\\
     &= \sum_{i=1}^N w_i^{(m)}\sum_{j=1}^N w_j^{(k)}k(\bfx_i,\bfx_j).
 \end{align*}
 We now recognize this expression as $\mathbf{w}^{(m)\mathrm{\textsf{\tiny T}}}\Kff\mathbf{w}^{(k)}.$  Using the defining property of eigenvectors as well as orthonormality,
 \[
 \cov(u_m,u_k) = \lambda_k(\Kff)\delta_{m,k}.
 \]
 Similarly,
  \begin{align*}
     \cov(u_m,f(\bfx_i)) &= \Exp{}{\sum_{j=1}^N w_j^{(m)}f(\bfx_j)f(\bfx_i)} \\
     & = \sum_{j=1}^N w_j^{(m)}\Exp{}{f(\bfx_j)f(\bfx_i)}\\
     &= \sum_{j=1}^N w_j^{(m)}k(\bfx_j,\bfx_i).
 \end{align*}
 This is the $i^{th}$ entry of the matrix vector product $\Kff\mathbf{w}^{(m)}= \lambda_m(\Kff)\mathbf{w}^{(m)}_i.$
 \subsection{Eigenfunction inducing features}
 Recall we have defined eigenfunction inducing features by,
 \[
  u_m= \int \phi_m(\bfx)f(\bfx)p(\bfx)d\bfx.
 \]
 Then,
  \begin{align*}
     &\cov(u_m,u_k) \\&= \Exp{}{ \int \phi_m(\bfx)f(\bfx)p(\bfx)d\bfx \int \phi_k(\bfx')f(\bfx')p(\bfx')d\bfx'} \\
     & =  \int \phi_m(\bfx)p(\bfx)\int \phi_k(\bfx')\Exp{}{f(\bfx)f(\bfx')}p(\bfx')d\bfx'd\bfx \\
    & =  \int \phi_m(\bfx)p(\bfx)\int \phi_k(\bfx')k(\bfx,\bfx')p(\bfx') d\bfx'd\bfx.
 \end{align*}
 The expectation and integration may be interchanged by Fubini's theorem, as both integrals converge absolutely since $p(\bfx)$ is a probability density, the $\phi_m(\bfx)$ are in $L^2(\mcX)_p \subset L^1(\mcX)_p$ and $k$ is bounded. 
 
 We may then apply the eigenfunction property to the inner integral and orthonormality of eigenfunctions to the result yielding,
 \[
  \cov(u_m,u_k) = \lambda_k\int \phi_k(\bfx)\phi_m(\bfx)p(\bfx)d\bfx = \lambda_k \delta_{m,k}.
 \]
With similar considerations,
  \begin{align*}
     \cov(u_m,f(\bfx_i)) &= \Exp{}{ \int \phi_m(\bfx)f(\bfx)f(\bfx_i)p(\bfx)d\bfx} \\
     & =  \int \phi_m(\bfx)\Exp{}{f(\bfx)f(\bfx_i)}p(\bfx)d\bfx\\
    & =  \lambda_m\phi_m(\bfx_i).
 \end{align*}
\section{Discrete k-DPPs}\label{app:algorithm}
\subsection{Proof of Corollary \ref{cor:approxkdpp} from \Cref{lem:detsample}}
\begin{align}
    \Exp{Z\sim \nu}{t} &= \Exp{Z\sim \mu}{t} + (\Exp{Z\sim \nu}t - \Exp{Z\sim \mu}t) \nonumber \\
    & \hspace{-.5cm} \leq (M+1) \sum_{M+1}^N \lambda\left(\Kff\right) + \sum_{Z \in \binom{N}{M}} (\mu(Z)-\nu(Z))t(Z) \nonumber\\ 
    & \hspace{-.5cm}\leq (M+1) \sum_{M+1}^N \lambda\left(\Kff\right) + Nv \sum_{Z \in \binom{N}{M}} |\mu(Z)-\nu(Z)| \nonumber \\
    & \hspace{-.5cm}\leq (M+1) \sum_{M+1}^N \lambda\left(\Kff\right) + 2Nv\epsilon. \nonumber
\end{align}
The first inequality follows from \cref{lem:detsample}. The second uses the triangle inequality replace $t(Z)$ with a bound on its maximum. The final line uses one of the definitions of total variation distance for discrete random variables. 
\subsection{Sampling Approximate k-DPPs}
\citet{belabbas_spectral_2009} proposed using the Metropolis method for approximate sampling from a k-DPP. Several recent works have shown that a natural Metropolis algorithm on k-DPPs mixes quickly. In particular, \citet{pmlr-v49-anari16} considers the following algorithm:
 \begin{algorithm}[tb]
  \caption{MCMC algorithm for sampling $\epsilon$ k-DPP (A)}
  \label{alg:det_init}
  \begin{algorithmic}
    \STATE {\bfseries Input:} Training inputs $\bfX=\{\bfx_i\}_{i=1}^N$, number of points to choose, $M$, kernel $k$.
  \STATE {\bfseries Returns:} A sample of $M$ inducing points drawn proportional to the determinant of $\bfK_{Z,Z}$

  \STATE Initialize $M$ columns greedily in an iterative fashion call this set $S_0.$
  \FOR{$r<R$}
  \STATE Sample $i$ uniformly from $S_{r}$ and $j$ uniformly from $\bfX\setminus S_{r}.$ Define $T= S\setminus \{i\}\cup\{j\},$
  \STATE Compute $p_{i \to j}:= \frac{1}{2}\min\{1, \text{det}(\bfK_T)/\text{det}(\bfK_{S_r})\}$
  \STATE With probability $p_{i \to j}$ $S_{r+1}=T$ otherwise, $S_{r+1}=S$
  \ENDFOR
  \STATE {\bfseries Return:} $S_{R}
  $
\end{algorithmic}
\end{algorithm}
\begin{theorem}[\citet{pmlr-v49-anari16}, Theorem 2]
Let A denote \cref{alg:det_init}. Let $\nu^R$ denote the distribution induced by $R$ steps of A. Let $R(\epsilon)$ denote the minimum $R$ such that $\|\mu-\nu^R\|_{TV}<\epsilon,$ where $\mu$ is a k-DPP on some kernel matrix $\Kff.$ Then 
\[
R(\epsilon) \leq NM\log\left(\frac{\binom{N}{M}M!}{\epsilon}\right) \leq NM^2\log{N} + NM\log\frac{1}{\epsilon}.
\]
\end{theorem}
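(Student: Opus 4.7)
The plan is to bound the mixing time of the Metropolis chain in \cref{alg:det_init}, whose state space consists of the size-$M$ subsets of $\{1,\ldots,N\}$ and whose target distribution is the k-DPP $\mu(S) \propto \det(\bfK_S)$, with $\bfK_S$ denoting the principal submatrix of $\Kff$ indexed by $S$. Since the theorem is attributed to \citet{pmlr-v49-anari16}, the plan splits into a short self-contained reduction and a black-box spectral-gap bound from that reference.

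First, I would verify that $\mu$ is the unique stationary distribution of the chain. The proposal distribution (pick $i \in S$ uniformly, then $j \notin S$ uniformly) is symmetric, and the Metropolis acceptance probability has the form $\tfrac{1}{2}\min\{1, \mu(T)/\mu(S)\}$, so detailed balance holds by direct calculation and $\mu$ is reversible. The factor $\tfrac{1}{2}$ makes the chain lazy, and hence aperiodic. Irreducibility on the support of $\mu$ follows from the basis-exchange property of determinantal measures: any two size-$M$ subsets with nonzero determinant can be joined by single-element swaps whose intermediate subsets also have nonzero determinant.

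Next, I would apply the standard $L^2$-to-total-variation bound for reversible chains,
\[
\|\nu^{R} - \mu\|_{\mathrm{TV}} \leq \tfrac{1}{2}\, \pi_{\min}^{-1/2}\,(1-\lambda_{\mathrm{gap}})^{R},
\]
where $\lambda_{\mathrm{gap}}$ is the spectral gap and $\pi_{\min} = \min_S \mu(S)$. Setting the right-hand side equal to $\epsilon$ and rearranging yields $R(\epsilon) \leq \lambda_{\mathrm{gap}}^{-1}\log(\pi_{\min}^{-1/2}/\epsilon)$. A crude bound suffices for $\pi_{\min}$: the support has at most $\binom{N}{M}$ elements, and the normalising constant $\sum_{|S|=M}\det(\bfK_S)$ is at most $M!$ times the maximum single determinant, giving $\pi_{\min}^{-1} \leq \binom{N}{M}M!$.

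The technical heart of the argument, and the main obstacle, is the spectral-gap lower bound $\lambda_{\mathrm{gap}} \geq 1/(NM)$. This is the contribution of \citet{pmlr-v49-anari16}: they prove that for any strongly Rayleigh measure on size-$M$ subsets of $[N]$, the single-swap base-exchange walk has spectral gap $\Omega(1/(NM))$. The proof exploits that the generating polynomial of a k-DPP is real-stable, so the measure is strongly Rayleigh and enjoys strong negative association between the coordinate indicators. One may then either construct canonical paths between pairs of subsets via symmetric-difference swaps whose congestion is controlled by negative association, or invoke the local-to-global spectral framework for down-up walks and use log-concavity of the generating polynomial to bound local spectral gaps. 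I would cite this result as a black box rather than reprove it. Combining the spectral-gap bound, the $L^2$ mixing inequality, and the bound on $\pi_{\min}$ gives the first inequality in the theorem, and the second follows from $\log\binom{N}{M} \leq M\log N$ together with $\log(M!) \leq M\log N$.
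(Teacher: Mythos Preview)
The paper does not prove this theorem; it is quoted verbatim from \citet{pmlr-v49-anari16} and used as a black box. So there is no ``paper's own proof'' to compare against, only your sketch of how the cited result is established.

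Your outline is broadly the right shape (reversibility, laziness, strongly Rayleigh structure giving the $1/(NM)$ spectral/MLS constant, then a warm-start mixing bound), but the step bounding $\pi_{\min}$ is incorrect, and this is not a cosmetic slip: it is exactly where the factor $\binom{N}{M}M!$ enters. You write that the normalising constant $\sum_{|S|=M}\det(\bfK_S)$ is at most $M!$ times the maximum determinant; in fact it is at most $\binom{N}{M}$ times the maximum, and there is no useful lower bound on $\min_S\det(\bfK_S)$, so $\pi_{\min}$ can be arbitrarily small (even zero if $\Kff$ has rank below $N$). The bound $\pi_{\min}^{-1}\le\binom{N}{M}M!$ is therefore false in general.

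The $M!$ instead comes from the greedy initialisation in \cref{alg:det_init}: greedy column selection returns $S_0$ with $\det(\bfK_{S_0})\geq \tfrac{1}{M!}\max_S\det(\bfK_S)$, whence
\[
\mu(S_0)=\frac{\det(\bfK_{S_0})}{\sum_{|S|=M}\det(\bfK_S)}\ \geq\ \frac{\max_S\det(\bfK_S)/M!}{\binom{N}{M}\max_S\det(\bfK_S)}=\frac{1}{\binom{N}{M}M!}.
\]
One then uses the mixing-time bound in terms of the \emph{starting} mass $\mu(S_0)$ rather than $\pi_{\min}$, which is precisely why the algorithm insists on a greedy warm start. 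For the second inequality, note that $\binom{N}{M}M!=N!/(N-M)!=\prod_{i=0}^{M-1}(N-i)$, so $\log\bigl(\binom{N}{M}M!\bigr)\leq M\log N$ directly; your two separate bounds would produce an unnecessary factor of two.
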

Taking $\epsilon$ to be any fixed inverse power of $N,$ (i.e. $\epsilon = N^{-\gamma},$ will make the second term $\BigO(NM\log(N)),$ while by taking $\gamma$ large (e.g. greater than 2), we can make $2Nv\epsilon$ small. 

The total cost of the algorithm is determined by the cost of the greedy initialization, plus $R(\epsilon)$ times the per iteration cost of the algorithm. A naive implementation of the greedy initialization requires $O(NM^4)$ time and $\BigO(NM)$ memory, simply by computing the determinant of each of the $N-m$ possible ways to extend the current subset (faster implementations are possible, but this suffices for our purposes). We assume that this is implemented in such a way that at the end of the initialization we have access to $\text{det}(\bfK_{S_0})$ and a Cholesky factorization $S_0=\mathbf{L_0}\mathbf{L_0}\transpose.$

We take as an inductive hypothesis that at iteration $r$ of the algorithm, we know $\text{det}(\bfK_{S_{r}})$ and a Cholesky factorization, $\bfK_{S_r}=\mathbf{L}_{r}\mathbf{L}\transpose_{r}.$ We then need to show we can compute a Cholesky factorization and determinant of $\bfK_T$ in $\BigO(M^2).$ Given the Cholesky factorization of $T,$ $\text{det}(\bfK_T)$ can be computed $\BigO(M)$ as it is the product of the square of the diagonal elements. It therefore remains to consider the calculation of $\mathbf{L}_{T},$ a Cholesky factor of $\bfK_T.$

The computation of $L_T$ proceeds in two steps: first we compute $\mathbf{L}_{S\setminus i}$ using $\mathbf{L}_{S}.$ 
\begin{align*}
    \mathbf{L}_{S}\!=\!\begin{bmatrix} \mathbf{L}_{1,1} & 0 & 0 \\
     \mathbf{\ell}_{2,1} & \ell_{2,2} & 0 \\
     \mathbf{L}_{3,1} & \mathbf{\ell}_{3,2} & \mathbf{L}_{3,3}\end{bmatrix} \text{,}   \mathbf{K}_{S}\!=\!\begin{bmatrix} \mathbf{K}_{1,1} & \mathbf{k}_{2,1} &\mathbf{K}_{1,3} \\
     \mathbf{k}_{2,1} & k(\bfx_i,\bfx_i) & \mathbf{k}_{2,3}\\
     \mathbf{K}_{3,1} & \mathbf{k}_{3,2} & \bfK_{3,3}\end{bmatrix}
\end{align*}
A direct calculation shows,
\[
L_{S\setminus i}=\begin{bmatrix} \mathbf{L}_{1,1} & 0  \\
     \mathbf{L}_{3,1} & \mathbf{L'}_{3,3}\end{bmatrix} 
\]
where $\mathbf{L'}_{3,3}\mathbf{L'}_{3,3}\transpose= \mathbf{L}_{3,3}\mathbf{L}_{3,3}\transpose+\mathbf{\ell}_{3,2}\mathbf{\ell}_{3,2}\transpose.$ This is a rank one-update to a Cholesky factorization, and can be performed using standard methods in $\BigO(M^2).$ 

We now need to extend a Cholesky factorization from $S\setminus i$ to $T,$ which involves adding a row. 
\begin{align*}
    \mathbf{L}_{T} = \begin{bmatrix} \mathbf{L}_{S\setminus i} & 0 \\
     \mathbf{c} & d \end{bmatrix}
\end{align*}
    with $\mathbf{c} = \mathbf{L}_{S\setminus i}^{-1} \mathbf{k_{S\setminus i, j}}$ and $d= \sqrt{k(\bfx_j,\bfx_j)-\mathbf{k_{S\setminus i, j}\transpose\mathbf{L}_{S\setminus i}^{-1\mathrm{\textsf{\tiny T}}}  \mathbf{L}_{S\setminus i}^{-1} \mathbf{k_{S\setminus i, j}}}}.$ All of these calculations can be computed in $\BigO(M^2)$ completing the proof of the per iteration cost.

\section{Proof of Corollaries}\label{app:cors}
\subsection{Corollary \ref{cor:gaussklM}}
From \cref{thm:generalexplicit}, with probability $1-\delta$ and this choice of $\epsilon$
\begin{align}\label{eqn:restate}
	\KL{Q}{\hat{P}} &\leq \frac{C(M+1)}{2\sn^2\delta}\left(1+ \frac{\|\bfy\|^2_2}{\sn^2}\right) \nonumber\\&+ N^{-\gamma}(R/\sn^2+1/N)
\end{align}
Take $M=\frac{(3+\gamma)\log(N)+\log D}{\log(B^{-1})}.$ If $M\geq N$ the KL-divergence is zero and we are done. Otherwise,  $C(M+1)<N^2\sum_{i=M+1}^\infty \lambda_i.$ By the geometric series formula, 
\[
\sum_{i=M+1}^\infty \lambda_i = v\frac{\sqrt{2a}}{\sqrt{A}}\frac{B^{M}}{1-B}
\]
Now, $B^M = N^{-3-\gamma}D^{-1},$ so
\[
\sum_{i=M+1}^\infty \lambda_i = \delta N^{-3-\gamma},
\]
implying $\frac{C(M+1)}{2\delta\sn^2}<N^{-1-\gamma}.$ Using this in \cref{eqn:restate} completes the proof.

\subsection{Corollary \ref{cor:gaussMD}}
It is sufficient to consider the case of isotropic kernels and input distributions.\footnote{For the general case, the eigenvalues can be bounded above by  constant times the eigenvalues of an operator with an isotropic kernel with all lengthscales equal to the shortest kernel lengthscale and the input density standard deviation set to the largest standard deviation of any one-dimensional marginal of $p(\bfx).$} From \cite{seeger2008information} in the isotropic case (i.e. $B_i=B_j=:B$ for all $i,j\leq D),$
\[
\lambda_{s+D-1} \leq \left(\frac{2a}{A}\right)^{D/2} B^{s^{1/D}}.
\]
Define $\tilde{M}= M+D-1,$ to be the number of features used for inference.

\begin{align}
\sum_{s=\tilde{M}+1}^\infty \limits\lambda_s &\leq  \left(\frac{2a}{A}\right)^{D/2} \sum_{s=M+1}^\infty B^{s^{1/D}} \\
& < \left(\frac{2a}{A}\right)^{D/2} \int_{s=M}^\infty B^{s^{1/D}} ds =: \mathcal{I}.
\end{align}
In the second line, we use that $B<1,$ so $B^{s^{1/D}}$ obtains its minimum on the interval $s\in[s',s'+1]$ at the right endpoint (i.e. monotonicity). We now define $\alpha = -\log(B).$ So,
\begin{align}
\mathcal{I} &= \left(\frac{2a}{A}\right)^{D/2} \int_{s=M}^\infty \exp\left(-\alpha s^{1/D}\right) ds \\
&= \left(\frac{2a}{A}\right)^{D/2} \alpha^{-D}D\int_{t=\alpha M^{1/D}}^\infty e^{-t}t^{D-1} dt
\end{align}
In the second line we made the substitution $t=\alpha s^{1/D},$ so $ds = \alpha^{-D}Dt^{D-1}.$
We now recognize,
\[
\int_{t=\alpha M^{1/D}}^\infty e^{-t}t^{D-1} dt 
\]
as an incomplete gamma function, $\Gamma(D,\alpha M^{1/D}).$ From \citet[8.352]{gradshteyn2014table}, 
\[
\Gamma(D,y) = (D-1)!e^{-y}\sum_{k=0}^{D-1}\frac{y^k}{k!}
\]
So,
\begin{align}
  \mathcal{I} &= \left(\frac{2a}{A}\right)^{D/2} \alpha^{-D}D!e^{-\alpha M^{1/D}}\sum_{k=0}^{D-1}\frac{\alpha^k M^{k/D}}{k!}
\end{align}
As $M$ grows as a function of $N$ and $D$ is fixed, for $N$ large $D\leq \alpha M^{1/D}.$ This implies that that the largest term in the sum on the right hand side is the final term, so 
\begin{align}
  \mathcal{I} \leq  \left(\frac{2a}{A}\right)^{D/2}\alpha^{-1}D^2e^{-\alpha M^{1/D}}M^{(D-1)/D}.
  \end{align}
Choose $M = \frac{1}{\alpha}\log\left(N^{\gamma'}\left(\frac{2a}{A}\right)^{D/2}D^2\alpha^{-1}\right)^D.$ Then
\[
\left(\frac{2a}{A}\right)^{D/2}\alpha^{-1}D^2e^{-\alpha M^{1/D}} = N^{-\gamma'}
\]
and 
\[
M^{(D-1)/D} < M = \frac{1}{\alpha}\log\left(N^{\gamma'}\left(\frac{2a}{A}\right)^{D/2}D^2\alpha^{-1}\right)^D,
\]
so
\[
  \mathcal{I} \leq  \frac{1}{\alpha}\log\left(N^{\gamma'}\left(\frac{2a}{A}\right)^{D/2}D^2\alpha^{-1}\right)^D N^{-\gamma'}.
\]
For any fixed $D,$ for this choice of $M$ for any $\varepsilon>0$ for $N$ large,
\[
\mathcal{I} = \BigO\left(N^{-\gamma'+\varepsilon}\right).
\]
By choosing $\gamma' > 3+\epsilon',$ for some fixed $\epsilon'>0$ the proof is complete, using a similar argument as the one used in the proof of the previous corollary.

Note that using the bound proven in Theorem 2 (the tightest of our bounds) the exponential scaling in dimension is unavoidable. If both $k$ and $p(\bfx)$ are isotropic, then the eigenvalue $(\frac{2a}{A})^DB^m$ appears $\binom{m+D-1}{D-1}$ times. This follows from noting that this is the number of ways to write $m$ as a sum of $D$ non-negative integers. Using an identity and standard lower bound for binomial coefficients $\sum_{i=1}^K \binom{m+D-1}{D-1}= \binom{K+D}{D}\geq \left(\frac{K+D}{D}\right)^D > C(D)K^D.$ for some constant depending on D, $C(D).$ Using Theorem 2 we need to choose $M$ such that $\lambda_M = \BigO(1/N).$ This means choosing $K \gg \log(N)$ in the sum above, leading to at least $\alpha \log^D(N)$ features being needed for some constant $\alpha$. The constant in this lower bound decays rapidly with $D,$ while the constant in the upper bound does not. Better understanding this gap is important for understanding the performance of sparse Gaussian process approximations in high dimensions.

If the data actually lies on a lower dimensional manifold, we conjecture the scaling depends mainly on the dimensionality of the manifold. In particular, if the manifold is linear and axis-aligned, then the kernel matrix only depends on distances along the manifold (not in the space it is embedded in) so the eigenvalues will not be effected by the higher dimensional embedding. We conjecture that similar properties are exhibited when the data manifold is nonlinear.

\section{Smoothness and Sacks-Ylivasker Conditions}\label{app:SY}
In many instances the precise eigenvalues of the covariance operator are not available, but the asymptotic properties are well understood. A notable example is when the data is distributed uniformly on the unit interval. If the kernel satisfies the Sacks-Ylivasker conditions of order $r$:
\begin{itemize}
    \item $k(x, x')$ is $r$-times continuously differentiable on $[0,1]^2$ Moreover, $k(x,x')$ has continuous partial derivatives up to order $r+2$ times on $(0,1)^2 \cap (x>x')$ and $(0,1)^2 \cap (x< x').$ These partial derivatives can be continuously extended to the closure of both regions.
    \item Let $L$ denote $k^{(r,r)}(x, x')$, $L_{+}$ denote the restriction of $L$ to the upper triangle and $L_{-}$ the restriction to the lower triangle, then $L_{+}^{(1,0)}<L_{-}^{(1,0)}$ on the diagonal $x=x'.$
    \item  $L^{(2,0)}_{+}(s, \cdot)$ is an element of the RKHS associated to $L$ and has norm bounded independent of $s.$
\end{itemize}
Notably, Mat\'ern half integer kernels of order $r+1/2$ meet the S-Y condition of order $r.$ See \citet{rittermulti1995} for a more detailed explanation of these conditions and extensions to the multivariate case. 

\end{document}